\definecolor{plainbackprop}{RGB}{180,0,0} 
\definecolor{sdobackprop}{RGB}{128,0,255} %
\begin{document}
\newtheorem{definition}{\bf Definition}
\newtheorem{theorem}{\bf Theorem}[section]
\newtheorem{proposition}{\bf Proposition}[section]
\newtheorem{lemma}{\bf Lemma}
\newtheorem{remark}{\bf Remark}
%
\title{You Only Look One Step: Accelerating Backpropagation in Diffusion Sampling \\ with Gradient Shortcuts}
%
%
%
%

\author{Hongkun~Dou, Zeyu~Li, Xingyu~Jiang, Hongjue~Li, Lijun~Yang, Wen~Yao, and~Yue~Deng,~\IEEEmembership{Senior~Member,~IEEE}
\IEEEcompsocitemizethanks{
\IEEEcompsocthanksitem Hongkun Dou, Zeyu Li, Xingyu Jiang, Hongjue~Li and Lijun Yang are with the School of Astronautics, Beihang University, Beijing 100191, China. E-mail: douhk, lizeyu123478, jxy33zrhd, lihongjue, yanglijun@buaa.edu.cn.

\IEEEcompsocthanksitem  Wen Yao is with the Defense Innovation Institute, Chinese Academy of
Military Science, Beijing 100071, China. E-mail: wendy0782@126.com.

\IEEEcompsocthanksitem Yue Deng is with the Institute of Artificial Intelligence, Beihang University, Beijing 100191, China, and also with Beijing Zhongguancun Academy,
Beijing 100089, China. E-mail: ydeng@buaa.edu.cn.
}
}

%
%

\markboth{}
{Shell \MakeLowercase{\textit{et al.}}: Bare Demo of IEEEtran.cls for Computer Society Journals}
%



\IEEEtitleabstractindextext{%
\begin{abstract}
Diffusion models (DMs) have recently demonstrated remarkable success in modeling large-scale data distributions. However, many downstream tasks require guiding the generated content based on specific differentiable metrics, typically necessitating backpropagation during the generation process. This approach is computationally expensive, as generating with DMs often demands tens to hundreds of recursive network calls, resulting in high memory usage and significant time consumption. In this paper, we propose a more efficient alternative that approaches the problem from the perspective of parallel denoising. We show that full backpropagation throughout the entire generation process is unnecessary. The downstream metrics can be optimized by retaining the computational graph of only one step during generation, thus providing a shortcut for gradient propagation. The resulting method, which we call \textbf{Shortcut Diffusion Optimization (SDO)}, is generic, high-performance, and computationally lightweight, capable of optimizing all parameter types in diffusion sampling. We demonstrate the effectiveness of SDO on several real-world tasks, including controlling generation by optimizing latent and aligning the DMs by fine-tuning network parameters. Compared to full backpropagation, our approach reduces computational costs by $\sim 90\%$ while maintaining superior performance. Code is available at \href{https://github.com/deng-ai-lab/SDO}{https://github.com/deng-ai-lab/SDO}.
\end{abstract}

\begin{IEEEkeywords}
Diffusion Models, Gradient Shortcuts, Controlled Generation, Reward Alignment.
\end{IEEEkeywords}}

\maketitle

\IEEEdisplaynontitleabstractindextext

%
\IEEEpeerreviewmaketitle

\IEEEraisesectionheading{\section{Introduction}\label{sec:introduction}}

\begin{figure*}[!t]
      \centering 
\includegraphics[width=\textwidth]{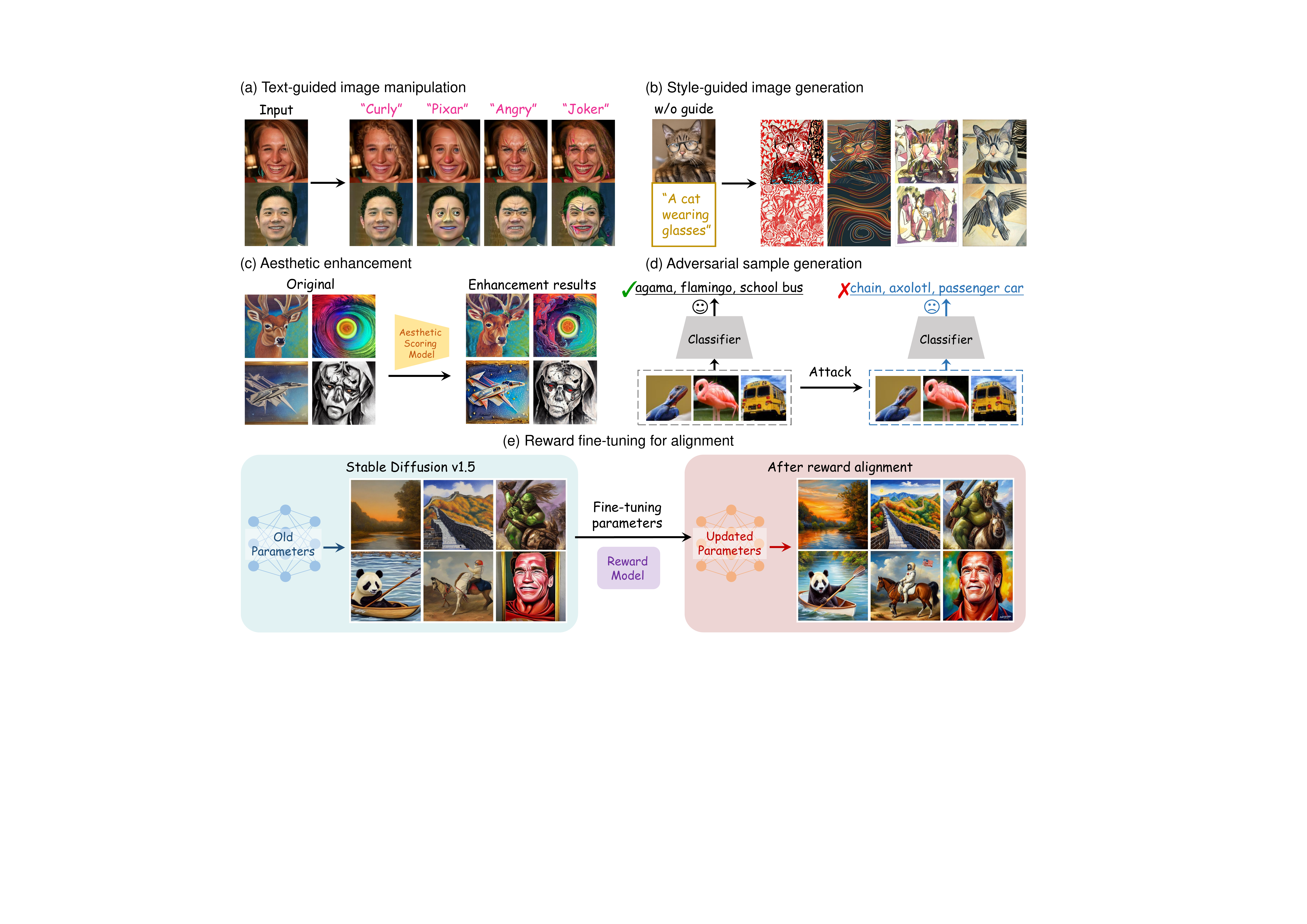}
\caption{\textbf{Shortcut Diffusion Optimization (SDO) enables diverse applications through backpropagation in diffusion sampling.} Parts (a)-(d) illustrate controlled generation by optimizing latent variables: (a) modifying an image based on text instructions using CLIP \cite{radford2021learning}; (b) controlling texture according to a given style reference while preserving content; (c) enhancing image aesthetics guided by an aesthetic predictor; and (d) generating adversarial perturbations that bypass well-trained classifiers. Part (e) demonstrates SDO’s capability for end-to-end fine-tuning of parameters, allowing the diffusion model to maximize specific rewards, such as aligning with human preferences.}\label{fig1}
\end{figure*}
%
%
%
%
\IEEEPARstart{D}{iffusion} models (DMs) \cite{ho2020denoising,song2020score} have successfully modeled large-scale data distributions and have become the standard for probabilistic modeling of continuous variables. Due to their flexibility and high performance, DMs are now widely applied in fields such as image synthesis \cite{rombach2022high,esser2024scaling,ramesh2022hierarchical, croitoru2023diffusion,bie2024renaissance}, video generation \cite{ho2022video,liu2024sora,yuan2025magictime}, molecular design \cite{xu2022geodiff,hoogeboom2022equivariant,luo2023fast} and control tasks \cite{janner2022planning,chi2023diffusion, hansen2023idql}. The core idea behind diffusion modeling is to learn the reverse process of noise generation, enabling the recovery of data from noise that approximates the target distribution. The learning objective typically involves approximately maximizing the likelihood. However, many applications of DMs focus on downstream tasks, such as improving the perceived quality of images \cite{wu2023human}, generating personalized styles \cite{sohn2023styledrop}, or optimizing attributes in drug design \cite{dorna2024tagmol}. During the pre-training phase, these objectives are difficult to achieve solely by maximizing the likelihood. Furthermore, training state-of-the-art DMs is computationally expensive, often requiring large-scale datasets and significant computing resources. As a result, a common approach is to customize pre-trained DMs for specific objectives. This allows the generation process to be controlled and adapted to new content without retraining the model from scratch \cite{chung2022diffusion,yu2023freedom}.

The common challenge in customizing and controlling diffusion models can be framed as an optimization problem, where the goal is to ensure that the generated content meets specific, desired characteristics. Specifically, we assume that this characteristic can be captured by a differentiable metric, allowing us to minimize the objective function to ensure that the sampling results align with the desired properties:
\begin{equation}\label{objective}
    \min_{\{\boldsymbol{x}_1,\epsilon_{\theta},\rho\}} \mathcal{J}(G(\boldsymbol{x}_1,\epsilon_{\theta},\rho))
\end{equation}
where $G$ represents the sampling process of the DM, and $\epsilon_{\theta}$ is the parameterized denoising network. The variables to be optimized are extensive, potentially including the model parameters $\theta$, the noisy latent variable $\boldsymbol{x}_T$, and the prompt embedding $ \rho $. For instance, one might optimize the latent to generate images with a style similar to a reference image \cite{yu2023freedom} or fine-tune the model’s parameters to improve the aesthetic score of the generated images \cite{prabhudesai2023aligning}.

A straightforward solution to this optimization problem is to perform gradient backpropagation throughout the diffusion sampling process. However, diffusion sampling involves an iterative denoising process, which requires making tens to hundreds of recursive network calls \cite{song2020denoising}. This approach leads to significant memory consumption. To address this, various solutions have been proposed, including the use of gradient checkpointing \cite{chen2016training}, adjoint methods \cite{panadjointdpm}, and reversible network modules \cite{wallace2023end}. However, these methods often come with a tradeoff in terms of computational time, as calculating gradients becomes more time-consuming. Additionally, recursive calculations can introduce issues such as gradient explosion or vanishing, posing further challenges for optimization.  This raises an important question: \textbf{is time-consuming full backpropagation truly necessary for diffusion models?}

In this paper, we demonstrate that surprisingly, it is possible to bypass much of the denoising process and efficiently optimize Eq.~\ref{objective} by retaining only a single step to compute the gradient. Our approach leverages recent advances in parallel denoising for DMs, which maintain full denoising trajectories and apply Picard iterative refinement until convergence at a fixed point \cite{shih2024parallel, luparasolver, tang2024accelerating}. We utilize insights from parallel denoising to reorganize the sequential sampling process in Eq.~\ref{objective}. By carefully managing gradient computation within the iterative process, we discover that it is sufficient to preserve the computational graph for only the denoising step of the variable being optimized. This provides a shortcut to backpropagation, dramatically reducing the computational burden. Our method, termed \textbf{Shortcut Diffusion Optimization (SDO)}, is a general and computationally lightweight algorithm that can optimize all parameter types in diffusion sampling. 

We validate the proposed SDO algorithm across several real-world tasks, including optimizing noisy latent for text-guided image manipulation, style-guided generation, aesthetic quality enhancement and adversarial sample generation. Additionally, we experiment with end-to-end fine-tuning of pre-trained DM parameters to align the model with specific reward (Fig.~\ref{fig1}). Despite its reduced computational complexity, we find that SDO frequently outperforms full backpropagation, delivering superior efficiency and performance across a wide range of tasks.

Our contributions are summarized as follows:
\begin{itemize}
    \item We address a fundamental yet underexplored problem: how to perform efficient backpropagation through the diffusion sampling chain. We provide a novel perspective by revisiting this issue through the lens of parallel sampling dynamics.

    \item Motivated by recent developments in the differentiation of fixed-point systems, we propose SDO, a principled and lightweight gradient approximation method. SDO retains only a single step in the computation graph, significantly reducing memory and computation costs.

    \item We empirically demonstrate that SDO achieves substantial efficiency gains over full backpropagation in downstream tasks such as controlled generation and reward alignment. Its stability, simplicity, and broad applicability make SDO a promising solution for scaling diffusion models to practical applications.
\end{itemize}

The remainder of this paper is organized as follows. In Section~\ref{relatedwork}, we review existing work relevant to Shortcut Diffusion Optimization (SDO). In Section~\ref{pre}, we provide background on diffusion models and introduce the foundation for parallel denoising. Section~\ref{meth} presents our core methodology: we begin by analyzing the limitations of plain backpropagation in diffusion sampling, then introduce the SDO framework, along with theoretical justifications and comparisons to existing alternatives. Section~\ref{app} demonstrates the effectiveness of SDO across a range of controllable generation and reward alignment tasks. Finally, Section~\ref{conc} concludes the paper and discusses potential directions for future research.

\section{Related Work} \label{relatedwork}

\subsection{Controllable Generation with diffusion models}
 Controllable generation using diffusion models (DMs) has been an active research area. Some approaches incorporate conditioning directly into the denoising networks \cite{zhang2023adding,mou2024t2i,xia2024diffusion,he2025diffusion}, while others utilize noise-aware classifiers or regressors to guide the sampling process \cite{dhariwal2021diffusion, nichol2021glide, weiss2023guided}. However, these methods often require task-specific training or paired data, which limits their flexibility. Another direction leverages DMs as plug-and-play priors. For instance, Weiss \emph{et al.} \cite{weiss2023guided} formulated an optimization framework that utilizes denoising loss as a regularization term for conditional sampling, although this often resulted in artifacts that required additional post-processing. Liu \emph{et al.} \cite{liu2023flowgrad} applied optimal control to achieve controllable generation by optimizing a set of control vectors. DITTO \cite{novack2024ditto} is a framework that can achieve the target output for text-to-music DMs by inference-time optimization. Other works, such as FreeDoM \cite{yu2023freedom} and UGD \cite{bansal2023universal}, proposed a one-step approximation using clean images to simplify the guidance term. While these approximations reduce computational complexity, they tend to be overly simplistic, leading to a degradation in the quality of generated images.

\subsection{Backpropagation through Diffusion Sampling}
Recently, Wallace \emph{et al.} \cite{wallace2023end} introduced DOODL, which enhances the guidance of DMs by enabling end-to-end optimization of latent variables. To perform backpropagation with constant memory during diffusion sampling, they employed invertible coupling layers \cite{dinh2014nice,dinh2016density}, addressing the challenge of memory-intensive gradient computations. Additionally, Pan \emph{et al.} \cite{panadjointdpm} proposed integrating adjoint sensitivity techniques \cite{pontryagin2018mathematical,chen2018neural} into diffusion ODEs to enable gradient computation across all parameter types in DMs. Blasingame \emph{et al.} \cite{blasingame2024adjointdeis} extended the adjoint sensitivity approach to diffusion SDEs. Although these methods reduce memory consumption, backpropagation through the entire sampling process remains computationally expensive, and the memory demands are still higher compared to standard DM training. In contrast, our work shows that full backpropagation is often unnecessary. We propose an efficient shortcut for gradient computation, presenting a significant step towards more practical backpropagation for diffusion sampling.

\subsection{Alignment of Text-to-Image Diffusion Models}
 Fine-tuning via reinforcement learning (RL) has been extensively used in language models to align model behavior with human preferences \cite{stiennon2020learning,ziegler2019fine,ouyang2022training}, and this approach has recently been extended to diffusion models. Lee \emph{et al.} \cite{lee2023aligning} and Wu  \emph{et al.} \cite{wu2023humanpreferencescorebetter} proposed using rewards to filter and weight samples to supervise the fine-tuning of diffusion models. More recently, Fan  \emph{et al.} \cite{fan2024reinforcement} and Black \emph{et al.} \cite{black2023training} formulated diffusion sampling as a Markov decision process, utilizing policy gradient-based RL algorithms to align model outputs with desired outcomes. Furthermore, Clark \emph{et al.} \cite{clark2023directly} and  Prabhudesai \emph{et al.} \cite{prabhudesai2023aligning} propose a faster fine-tuning approach than RL by directly backpropagating through differentiable rewards during the sampling process, reducing computational consumption via truncated backpropagation. Wu \emph{et al.} \cite{wu2024deep} proposed a heuristic to explore stopping the input gradient to the denoising network when fine-tuning diffusion models. Our proposed algorithm is compatible with this alignment paradigm. Unlike previous truncation methods, SDO introduces gradient shortcuts that eliminate unnecessary computations, resulting in a more efficient and stable solution.

\section{Preliminaries}\label{pre}
\subsection{Diffusion Models and Probability Flows}
The diffusion model (DM) \cite{ho2020denoising,song2020score} defines a stochastic differential equation (SDE) to perturb samples $\boldsymbol{x}_0 \sim p_{\text{data}}$ over time:
\begin{equation}
    \mathrm d\boldsymbol{x}_t=f(t)\boldsymbol{x}_t\mathrm dt+g(t)\mathrm d\boldsymbol{w}_t,
\end{equation}
where $\boldsymbol{w}_t$ is the standard Wiener process, $f(t)$ and $g(t)$ are the drift and diffusion coefficients, respectively. This forward process has the transition kernel $q(\boldsymbol{x}_t|\boldsymbol{x}_0) = \mathcal{N}(\alpha_t\boldsymbol{x}_0,{\sigma_t}^2\boldsymbol{I})$. By running this SDE, noise is gradually introduced into the initial sample $\boldsymbol{x}_0$, until it converges to a standard normal distribution $\boldsymbol{x}_1 \sim \mathcal{N}(\mathbf{0}, \boldsymbol{I})$ at time $t=1$. To recover the samples from the noise, we solve the corresponding inverse SDE step-by-step:
\begin{equation}
    \mathrm d\boldsymbol{x}_t=\left[f(t)\boldsymbol{x}_t-g^2(t)\nabla_{\boldsymbol{x}_t}\log q_{t}(\boldsymbol{x}_t)\right]\mathrm dt+g(t)\mathrm d\boldsymbol{w}_t,
\end{equation}
with the standard Wiener process $\bar{\boldsymbol{w}}_{t}$ running in reverse time.  The term $\nabla_{\boldsymbol{x}_t}\log q_{t}(\boldsymbol{x}_t)$, known as the score function, can be approximated as $\nabla_{\boldsymbol{x}_t}\log q_{t}(\boldsymbol{x}_t) \approx -\epsilon_{\theta}(\boldsymbol{x}_t,t)/\sigma_t$ by training a neural network using denoising score matching \cite{song2019generative}. The denoising process then runs the inverse SDE to recover the original samples from the noise. 

Alternatively, the same marginal densities of the above SDE can also be modeled through a probabilistic flow ordinary differential equation (PF-ODE) \cite{song2020score}:
\begin{equation}\label{eq_ode}
    \mathrm d\boldsymbol{x}_t=\underbrace{\left[f(t)\boldsymbol{x}_t-\frac{g^2(t)}{2\sigma_t}\epsilon_{\theta}(\boldsymbol{x}_t,t)\right]}_{\mathrm{velocity~field~}u_{\theta}(\boldsymbol{x}_t,t)}\mathrm dt,
\end{equation}
The ODE formulation has been shown to sample more efficiently than its SDE counterpart. Recent research suggests learning the velocity field directly through Flow Matching \cite{lipman2022flow,liu2022flow,esser2024scaling} to model these dynamics. Notably, our proposed SDO remains fully compatible with these broader models.

We can leverage off-the-shelf numerical solvers to solve Eq.~\ref{eq_ode} numerically. One widely used approach is the DDIM solver \cite{song2020denoising}, expressed as:
\begin{equation}\label{ddim}
\boldsymbol{x}_{n}=\boldsymbol{x}_{n+1}-\frac{1}{N} u_\theta\left(\boldsymbol{x}_{n+1}, \frac{n+1}{N}\right),
\end{equation}
where the integration interval $\left[0,1\right]$ is discretized with a step size of $1/N$. In practice, selecting an appropriate number of steps $N$  is crucial, with typical values ranging from tens to hundreds for PF-ODE. However, such a sequential sampling process results in a very deep computational graph, which poses significant challenges for backpropagation.

\begin{figure}
		\centering 
		\includegraphics[width=\columnwidth]{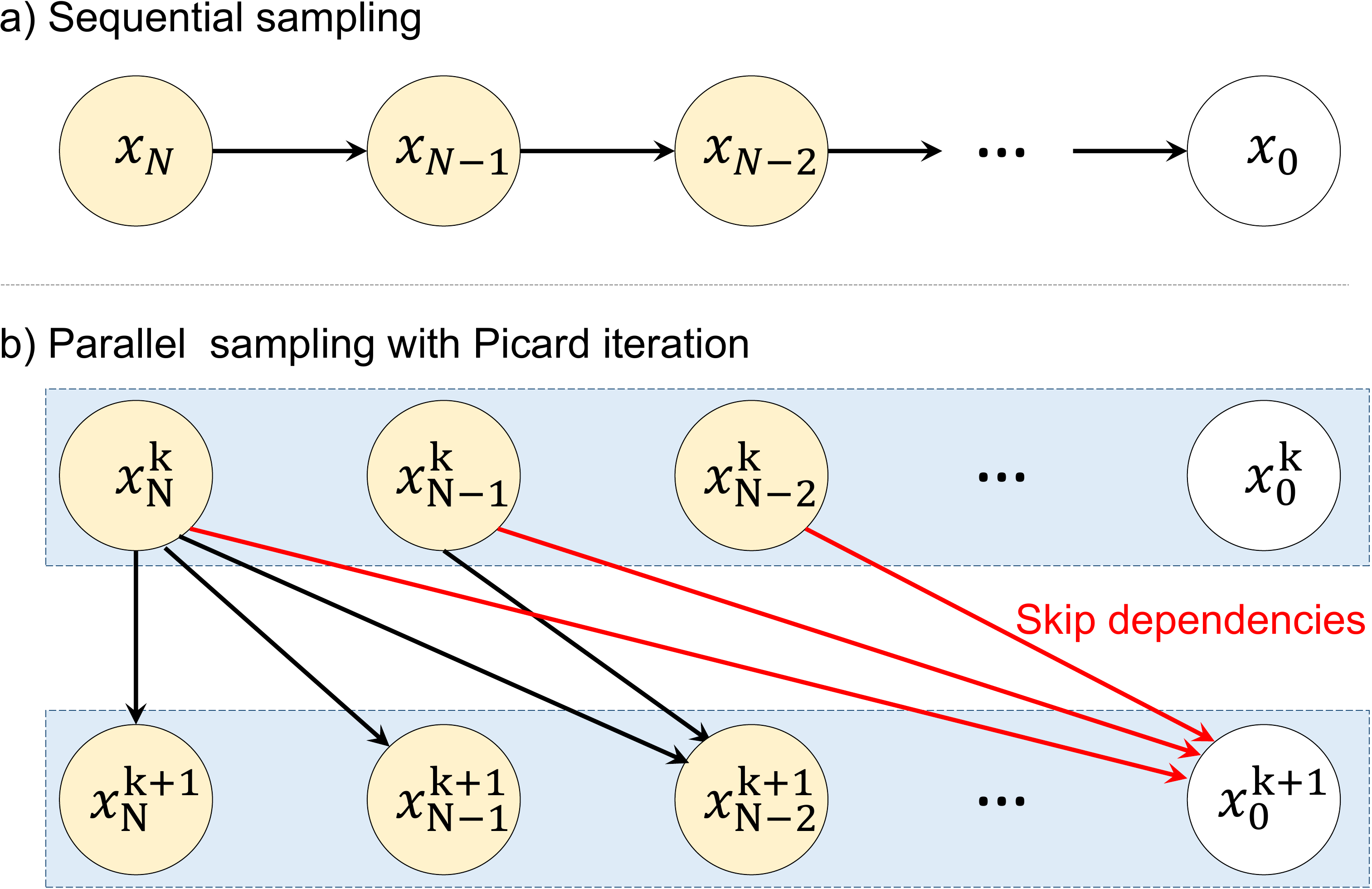}
		\caption{Graphical models of DMs illustrating (a) sequential sampling \cite{song2020denoising} and (b) parallel sampling using Picard iteration \cite{shih2024parallel}. Picard iteration introduces \textcolor{red}{skip dependencies} between $\boldsymbol{x}_0$ and $\{\boldsymbol{x}_N,\cdots,\boldsymbol{x}_1\}$, which inspired the development of SDO.}  \label{graph_2}
\end{figure}

\subsection{Parallel Denoising with Picard Iteration}
To enable parallel denoising across all time steps, Shih \emph{et al.} \cite{shih2024parallel} propose using \emph{Picard iteration}, a fixed-point method for solving ODEs. The core idea is to rewrite Eq.~\ref{ddim} in its integral form:
\begin{equation}\label{int}
    \boldsymbol{x}_n=\boldsymbol{x}_N+\int_1^{n/N}u_{\theta}(\boldsymbol{x}_t,t)\mathrm d t \approx \boldsymbol{x}_N-\frac{1}{N}\sum_{i=N}^{n+1} u_{\theta}(\boldsymbol{x}_i, \frac{i}{N})
\end{equation}
where the integral is approximated by a discrete sum over the time steps. The above equation allows the sampling process to be treated as a sequence of updates, which is iteratively refined until convergence. The iterative process begins with an initial guess for the sequence $\{\boldsymbol{x}^{k}_0,\boldsymbol{x}^{k}_2,\cdots, \boldsymbol{x}^{k}_{N}\}$ at initial iteration  $k=0$. The sequence is then updated using the following iterative rule:
\begin{equation}\label{int}
    \boldsymbol{x}^{k+1}_n= \boldsymbol{x}^{k}_N-\frac{1}{N}\sum_{i=N}^{n+1} u_{\theta}(\boldsymbol{x}^{k}_i, \frac{i}{N})
\end{equation}
Each timestep $n$ can be updated independently within a given Picard iteration, enabling parallel computation across all steps.  The update rule for the entire sequence can be expressed compactly as:
\begin{equation}\label{para}
\begin{aligned}
\underbrace{\left[\begin{array}{c}
        \boldsymbol{x}_0^{k+1}\\
        \boldsymbol{x}_1^{k+1}\\
        \vdots \\
\boldsymbol{x}_{N-1}^{k+1} \\
\boldsymbol{x}_{N}^{k+1} \\
\end{array}\right]}_{\boldsymbol{x}_{0:N}^{k+1}}=\underbrace{\left[\begin{array}{c}
\boldsymbol{x}_{N}^k-\frac{1}{N}\sum_{i=N}^{1} u_\theta\left(\boldsymbol{x}_{i}^k, \frac{i}{N}\right)\\
\boldsymbol{x}_{N}^k-\frac{1}{N}\sum_{i=N}^{2} u_\theta\left(\boldsymbol{x}_{i}^k, \frac{i}{N}\right)\\
\vdots \\
\boldsymbol{x}_{N}^k-\frac{1}{N} u_\theta\left(\boldsymbol{x}_{N}^k, 1\right) \\
\boldsymbol{x}_{N}^k
\end{array}\right]}_{\mathcal{F}_{\theta}(\boldsymbol{x}_{0:N}^{k})}\\
\end{aligned}
\end{equation}
In this formulation, each element in the sequence at iteration $k+1$ depends on the corresponding element and all previous elements from the previous iteration $k$. This dependency effectively introduces \emph{skip dependencies} (shown in Fig.~\ref{graph_2}) into the computational graph, improving the propagation of information in the sampling chain. For a sufficiently large number of iterations, we have,
\begin{equation}\label{fix-cond}
    \lim_{k \rightarrow \infty} \mathcal{F}_{\theta}(\boldsymbol{x}_{0:N}^{k}) = \mathcal{F}_{\theta}(\boldsymbol{x}_{0:N}^{*}) = \boldsymbol{x}_{0:N}^{*}
\end{equation}
Empirically, the number of Picard iterations required to reach convergence is significantly smaller than the total number of timesteps $N$. 

\begin{figure*}
		\centering 
		\includegraphics[width=\textwidth]{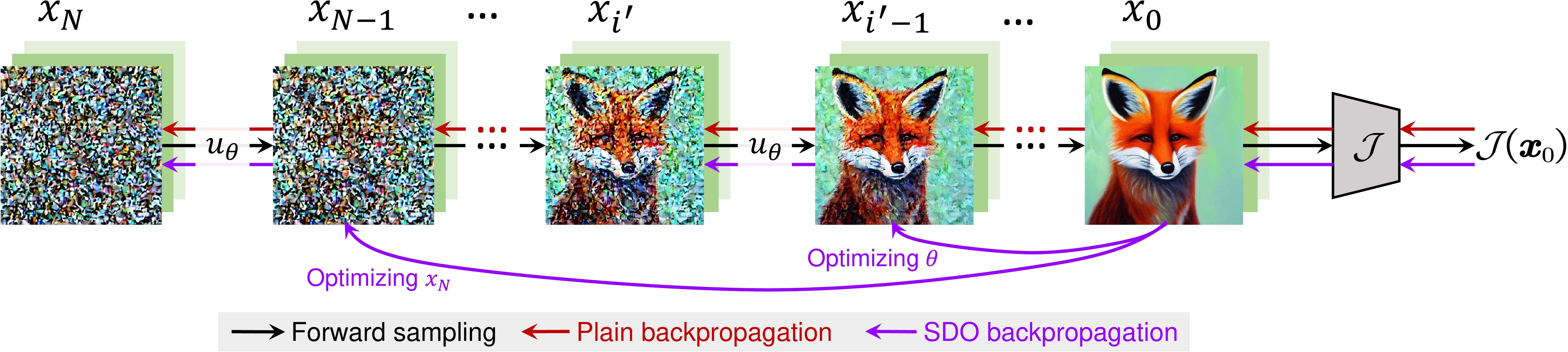}
		\caption{\textcolor{plainbackprop}{Plain backpropagation} inefficiently differentiates the objective function through the entire forward sampling process. In contrast, \textcolor{sdobackprop}{SDO backpropagation} requires differentiation only at the specific timestep associated with the variable being optimized (e.g., $\boldsymbol{x}_N$ at step $N$ or $\theta$ at step $i^\prime$), providing an efficient \textcolor{sdobackprop}{shortcut} for the backward pass of the gradient.} \label{sample-figure}
\end{figure*}

\section{Method} \label{meth}
In this section, we first derive the gradient for plain backpropagation in DMs and analyze the key contributors to its memory inefficiency (Sec.~\ref{4.1}). To address these challenges, we reformulate the optimization problem using the Picard iteration-induced fixed-point system. We demonstrate how this approach avoids specific memory bottlenecks, offering a more efficient shortcut for gradient propagation (Sec.~\ref{4.2}).

\subsection{Analysis of Plain Backpropagation} \label{4.1}
In plain backpropagation of diffusion sampling, gradient computation involves backpropagating through the entire generation process, which can be represented as a series of recursive updates resembling a recurrent neural network (RNN) \cite{medsker2001recurrent,hochreiter1997long}. For a given objective $\mathcal{J}(\boldsymbol{x}_0)$, the optimization problem is expressed as:
\begin{equation}\label{prob1}
\begin{aligned}
        &\min_{\{\boldsymbol{x}_N,\theta\}} \mathcal{J}(\boldsymbol{x}_0)\\
        s.t.\quad &\boldsymbol{x}_{n}=\boldsymbol{x}_{n+1}-\frac{1}{N} u_\theta\left(\boldsymbol{x}_{n+1}, \frac{n+1}{N}\right)\\ &
         \forall  \ n\in \{0,1,\cdots,N\}.
\end{aligned}
\end{equation}
For simplicity, we do not explicitly consider prompt embeddings, as they can be treated as network parameters. Optimizing such an RNN-like computational graph typically involves Backpropagation Through Time (BPTT) \cite{rumelhart1986learning,werbos1990backpropagation}.  The gradient formulas for the latent variable $\boldsymbol{x}_N$ and the parameter $\theta$ are given by:
\begin{subequations}
\begin{align}
    &\nabla_{\boldsymbol{x}_N}\mathcal{J}(\boldsymbol{x}_0)= \frac{\partial \mathcal{J}}{\partial \boldsymbol{x}_0} \textcolor{orange}{\left(\prod_{j=1}^{N-1} \frac{\partial \boldsymbol{x}_{j-1}}{\partial \boldsymbol{x}_{j}}\right)} \frac{\partial \boldsymbol{x}_{N-1}}{\partial \boldsymbol{x}_{N}}\label{grad_a}\\
    &\nabla_{\theta}\mathcal{J}(\boldsymbol{x}_0)=-\frac{1}{N}\sum_{i=1}^{N} \frac{\partial \mathcal{J}}{\partial \boldsymbol{x}_0}\textcolor{orange}{\left(\prod_{j=1}^{i} \frac{\partial \boldsymbol{x}_{j-1}}{\partial \boldsymbol{x}_{j}}\right)} \frac{\partial u_\theta\left(\boldsymbol{x}_{i},\frac{i}{N}\right)}{\partial \theta}\label{grad_b}
\end{align}
\end{subequations}
The primary source of computational complexity lies in the product terms $\prod_{j} \frac{\partial \boldsymbol{x}_{j-1}}{\partial \boldsymbol{x}_{j}}$ (highlighted in \textcolor{orange}{orange}). These terms require storing all intermediate activations during the generation process, leading to high memory consumption. Although techniques such as gradient checkpointing, adjoint sensitivity methods, and reversible networks can reduce memory consumption, they often come with additional computational overhead, limiting overall efficiency.

\subsection{Shortcut Diffusion Optimization (SDO)} \label{4.2}

To address the memory inefficiency inherent in plain backpropagation for DMs, we propose \textbf{Shortcut Diffusion Optimization (SDO)}. SDO leverages Picard iteration to reformulate Eq.~\ref{prob1} as an optimization problem for a fixed-point system. Specifically, after a sufficient number of iterations, once convergence is achieved, the solution must satisfy the fixed-point condition (Eq.~\ref{fix-cond}). Thus, the optimization problem can be expressed as:
\begin{equation}\label{prob2}
\begin{aligned}
        &\min_{\{\boldsymbol{x}_N,\theta\}} \mathcal{J}(\boldsymbol{x}_{0:N}^{*})\\
s.t.\quad&\boldsymbol{x}_{0:N}^{*}=\mathcal{F}_\theta(\boldsymbol{x}_{0:N}^{*}).\\
\end{aligned}
\end{equation}
Our key insight is that the fixed-point constraints in the above equation naturally align with the original sequential sampling chain. In other words, solving Eq.~\ref{prob2} is equivalent to solving Eq.~\ref{prob1}. Consequently, we analyze the gradient of the fixed-point system to develop an efficient optimization strategy. 

\subsubsection{Efficient Gradient Computation}
The differentiation of iterative dynamics has been well-explored in the literature, with traditional methods relying on costly implicit differentiation techniques \cite{lorraine2020optimizing, blondel2022efficient}. Let $(\cdot)$ denote the variable for which the gradient is computed. The sequence $\boldsymbol{x}_{0:N}^{*}$ can be interpreted as an implicitly defined function of $(\cdot)$. By applying the implicit function theorem (IFT) \cite{griewank2008evaluating, krantz2002implicit} to the constraint in Eq.~\ref{prob2}, the gradient with respect to $(\cdot)$ can be computed as follows:

\begin{equation}\label{ift}
    \nabla_{(\cdot)}\mathcal{J}(\boldsymbol{x}_{0:N}^{*}) =  \frac{\partial \mathcal{J}}{\partial \boldsymbol{x}_{0:N}^{*}} \underbrace{\left( I- \frac{\partial \mathcal{F}_\theta(\boldsymbol{x}_{0:N}^{*})}{\partial \boldsymbol{x}_{0:N}^{*}}\right)^{-1}}_{\text{matrix
inversion}}  \frac{\partial \mathcal{F}_\theta(\boldsymbol{x}_{0:N}^{*})}{\partial (\cdot)} 
\end{equation}
In this expression, implicit differentiation involves solving the inversion of a linear system, which can be computationally prohibitive, especially for large-scale models. Nevertheless, recent advances \cite{geng2021attentionbettermatrixdecomposition, fung2022jfb, bolte2024one, du2022learning} have demonstrated that better results can be achieved by using approximate, inexpensive gradients, a technique known as one-step gradient. This can be formally expressed as:
\begin{equation}\label{impli}
    \widehat{\nabla_{(\cdot)}\mathcal{J}(\boldsymbol{x}_{0:N}^{*})} =  \frac{\partial \mathcal{J}}{\partial \boldsymbol{x}_{0:N}^{*}}  \frac{\partial \mathcal{F}_\theta(\boldsymbol{x}_{0:N}^{*})}{\partial (\cdot)} = \frac{\partial \mathcal{J}}{\partial \boldsymbol{x}_{0:N}}  \frac{\partial \mathcal{F}_\theta(\boldsymbol{x}_{0:N})}{\partial (\cdot)}
\end{equation}
The one-step gradient can be interpreted as differentiating only for the final iteration, bypassing the need for matrix inversion and offering a more efficient alternative to Eq.~\ref{ift}. The second equality of Eq.~\ref{impli} arises because the sequence $\boldsymbol{x}_{0:N}^{*}$ still satisfies the constraint of DDIM discretization in Eq.~\ref{ddim}, as indicated by the conclusion of the following proposition.

\begin{proposition} Given that $\boldsymbol{x}_{0:N}^{*}$ is the fixed point of Eq.~\ref{fix-cond} and $\boldsymbol{x}_{0:N}$ is the DDIM discretization as defined in Eq.~\ref{ddim}, if the initial noise $\boldsymbol{x}_{N}^{*}= \boldsymbol{x}_{N}^{*}$, then we have $\boldsymbol{x}_{0:N}^{*} = \boldsymbol{x}_{0:N}$. \label{prop1} \end{proposition}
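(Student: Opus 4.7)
The plan is to prove this by a backward induction on the time index $n$, running from $n=N$ down to $n=0$. The proposition essentially asserts that the fixed point produced by Picard iteration is exactly the sequence that plain sequential DDIM would produce from the same initial noise; since the Picard update is nothing but the telescoped (integral) form of the DDIM recursion, the two coincide whenever the terminal condition agrees.

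First, I would unfold the DDIM recursion in Eq.~\ref{ddim} by repeatedly substituting $\boldsymbol{x}_{n+1}$ into $\boldsymbol{x}_n$. This gives the telescoped identity
\begin{equation*}
\boldsymbol{x}_{n}=\boldsymbol{x}_{N}-\frac{1}{N}\sum_{i=N}^{n+1} u_{\theta}\!\left(\boldsymbol{x}_{i},\tfrac{i}{N}\right),
\end{equation*}
which has precisely the same functional form as the $n$-th row of the fixed-point operator $\mathcal{F}_{\theta}$ in Eq.~\ref{para}. Next, I would carry out the induction. The base case $n=N$ is trivial: by hypothesis $\boldsymbol{x}_{N}^{*}=\boldsymbol{x}_{N}$ (correcting the evident typo in the statement). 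For the inductive step, assume $\boldsymbol{x}_{i}^{*}=\boldsymbol{x}_{i}$ for every $i\ge n+1$; applying the fixed-point condition $\boldsymbol{x}_{0:N}^{*}=\mathcal{F}_{\theta}(\boldsymbol{x}_{0:N}^{*})$ at row $n$ yields
\begin{equation*}
\boldsymbol{x}_{n}^{*}=\boldsymbol{x}_{N}^{*}-\frac{1}{N}\sum_{i=N}^{n+1}u_{\theta}\!\left(\boldsymbol{x}_{i}^{*},\tfrac{i}{N}\right)=\boldsymbol{x}_{N}-\frac{1}{N}\sum_{i=N}^{n+1}u_{\theta}\!\left(\boldsymbol{x}_{i},\tfrac{i}{N}\right)=\boldsymbol{x}_{n},
\end{equation*}
where the second equality uses the inductive hypothesis and the third is the telescoped DDIM identity. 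This closes the induction and gives $\boldsymbol{x}_{0:N}^{*}=\boldsymbol{x}_{0:N}$.

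There is no real obstacle here: the claim is essentially a bookkeeping statement that identifies two equivalent representations of the same discrete flow. The only mild subtlety is making sure the index bookkeeping in the telescoped sum matches the definition of $\mathcal{F}_{\theta}$ row by row (in particular the convention that the top row $\boldsymbol{x}_0^{k+1}$ sums from $i=N$ down to $i=1$, and that the bottom row $\boldsymbol{x}_N^{k+1}=\boldsymbol{x}_N^{k}$ corresponds to the empty sum), and handling the typo in the stated initial-noise condition. Existence of a fixed point and its uniqueness are not needed for this direction: given any fixed point sharing the terminal noise, backward induction pins down the entire trajectory.
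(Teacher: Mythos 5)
Your proposal is correct and follows essentially the same route as the paper's own proof: an induction over timesteps starting from the shared terminal noise $\boldsymbol{x}_N=\boldsymbol{x}_N^{*}$, using the fact that each row of the Picard operator $\mathcal{F}_\theta$ is just the telescoped DDIM recursion (the paper steps through the recursion and re-telescopes the fixed-point rows, while you telescope the DDIM recursion first and then match rows, a purely presentational difference). Your handling of the empty-sum bottom row and of the typo in the stated initial-noise condition matches what the paper implicitly does.
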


\begin{proof}
Since the initial noise satisfies $x_N = x^*_N$, we can derive the following:
\begin{equation}
    \boldsymbol{x}_{N-1} = \boldsymbol{x}_N - \frac{1}{N} u_\theta(\boldsymbol{x}_N, 1) = \boldsymbol{x}^*_N - \frac{1}{N} u_\theta(\boldsymbol{x}^*_N, 1) = \boldsymbol{x}^*_{N-1} \nonumber
\end{equation}
Next, using the above relationship, we further deduce:
\begin{equation}
    \begin{aligned}
        \boldsymbol{x}_{N-2} &=  \boldsymbol{x}_{N-1} - \frac{1}{N} u_\theta(\boldsymbol{x}_{N-1}, \frac{N-1}{N})\\
        &= \boldsymbol{x}_{N-1}^* - \frac{1}{N} u_\theta(\boldsymbol{x}_{N-1}^*, \frac{N-1}{N})\\
        &=\boldsymbol{x}_{N}^*-\frac{1}{N}\sum_{i=N}^{N-1}  u_\theta(\boldsymbol{x}_{i}^*, \frac{i}{N})=\boldsymbol{x}_{N-2}^* \nonumber
    \end{aligned}
\end{equation}
By simple mathematical induction, it follows that  $\boldsymbol{x}_{i} = \boldsymbol{x}_{i}^*$ for all $i \in [0,1,...,N]$. Hence, the conclusion is valid.
\end{proof}

For simplicity, we can omit the superscript $*$ in the subsequent derivations.

\subsubsection{Gradient w.r.t. the Latent Variable $\boldsymbol{x}_N$}
We first consider the optimization of the latent variable $\boldsymbol{x}_N$. Since the objective function $\mathcal{J}$ typically depends only on the endpoint sample $\boldsymbol{x}_0$, the gradient of Eq.~\ref{impli} can be simplified as:
\begin{equation}\label{g_x}
\begin{aligned}
    \widehat{\nabla_{\boldsymbol{x}_{N}}\mathcal{J}(\boldsymbol{x}_{0})}&=\frac{\partial \mathcal{J}}{\partial \boldsymbol{x}_{0}}\frac{\partial \left(\boldsymbol{x}_{N}-\frac{1}{N}\sum_{i=N}^{1} u_\theta\left(\boldsymbol{x}_{i}, \frac{i}{N}\right)\right)}{\partial \boldsymbol{x}_{N}}\\&=\frac{\partial \mathcal{J}}{\partial \boldsymbol{x}_{0}}  \frac{\partial \left(\boldsymbol{x}_{N}-\frac{1}{N} u_\theta\left(\boldsymbol{x}_{N}, 1\right)\right)}{\partial \boldsymbol{x}_{N}}\\&=\frac{\partial \mathcal{J}}{\partial \boldsymbol{x}_{0}}  \frac{\partial \boldsymbol{x}_{N-1}}{\partial \boldsymbol{x}_{N}}
    \end{aligned}
\end{equation}
Compared to the original gradient in Eq.~\ref{grad_a}, this simplified formulation shows that we bypass the product terms of the Jacobian highlighted in \textcolor{orange}{orange}, effectively avoiding the need to backpropagate through the entire sequence of intermediate steps. Instead, we compute the gradient by focusing only on the relationship between the final two states, which significantly reduces computational overhead.

\subsubsection{Gradient w.r.t. Network Parameters $\theta$}
 Similarly, the gradient with respect to the network parameters $\theta$ is given by:
  \begin{equation}\label{SDO_theta_eq}
     \begin{aligned}
    \widehat{\nabla_{\theta}\mathcal{J}(\boldsymbol{x}_{0})}&=\frac{\partial \mathcal{J}}{\partial \boldsymbol{x}_{0}}\frac{\partial \left(\boldsymbol{x}_{N}-\frac{1}{N}\sum_{i=N}^{1} u_\theta\left(\boldsymbol{x}_{i}, \frac{i}{N}\right)\right)}{\partial \theta}\\&=\frac{\partial \mathcal{J}}{\partial \boldsymbol{x}_{0}}\frac{\partial \left(-\frac{1}{N}\sum_{i=N}^{1} u_\theta\left(\boldsymbol{x}_{i}, \frac{i}{N}\right)\right)}{\partial \theta}\\& 
  =-\frac{1}{N}\sum_{i=1}^{N}\frac{\partial \mathcal{J}}{\partial \boldsymbol{x}_{0}}  \frac{\partial u_\theta\left(\boldsymbol{x}_{i},\frac{i}{N}\right)}{\partial \theta}
    \end{aligned}
 \end{equation}
From the above derivation, we observe a similar conclusion: the burdensome product term in Eq.~\ref{grad_b} can also be eliminated when optimizing the network parameters. However, the summation over all denoising steps still implies a need to store all intermediate states, which would increase memory requirements. To address this issue, we reinterpret Eq.~\ref{SDO_theta_eq} as an expectation, which can be approximated by sampling a single timestep $i^\prime$ from a uniform distribution over all timesteps $i\sim \mathrm{Uniform}(\{1,\cdots,N\})$ at each backpropagation phase:
\begin{equation}\label{g_theta}
     \begin{aligned}
   \widehat{\nabla_{\theta}\mathcal{J}(\boldsymbol{x}_{0:N})}&= -\mathbb{E}_{i}\left[\frac{\partial \mathcal{J}}{\partial \boldsymbol{x}_{0}}  \frac{\partial u_\theta\left(\boldsymbol{x}_{i},\frac{i}{N}\right)}{\partial \theta}\right]\\&\approx -\frac{\partial \mathcal{J}}{\partial \boldsymbol{x}_{0}}  \frac{\partial u_\theta\left(\boldsymbol{x}_{i^\prime},\frac{i^\prime}{N}\right)}{\partial \theta}
   \end{aligned}
\end{equation}
This approximation allows us to avoid storing all intermediate states, reducing memory consumption while still effectively optimizing the network parameters.

\subsection{Practical Implementation of SDO}
In practice, implementing the proposed gradient computation is straightforward. While our derivation relies on Picard parallel sampling, rather than the commonly used sequential diffusion solver, Proposition~\ref{prop1} demonstrates the equivalence of the two denoising trajectories. As such, SDO is fully compatible with standard off-the-shelf diffusion solvers, and the sampling process in the forward pass of the model remains unchanged. The key difference lies in the backpropagation: we only retain the computational graph for the specific timestep associated with the optimization variables. For all other steps, we use \texttt{set\_grad\_enabled} in PyTorch \cite{paszke2019pytorch} for $u_{\theta}$ to skip unnecessary Jacobian computations during backpropagation.  We show the PyTorch-style implementation of our SDO in Figs. \ref{SDO_x} and \ref{SDO_theta}. 

Note that \texttt{torch.set\_grad\_enabled} disables gradients only for network inference, i.e., when calculating $u_\theta\left(\cdot,\cdot\right)$. Importantly, the call to \texttt{schedule.step} does not disable gradients, as it is responsible for computing $\boldsymbol{x}_{n-1}$ from $\boldsymbol{x}_{n}$. Consequently, the sampling result in Fig. \ref{SDO_x} can be written as:
\begin{equation}
    \boldsymbol{x}_{0} = \boldsymbol{x}_{N}- \frac{1}{N}u_\theta(\boldsymbol{x}_N,1) - \frac{1}{N}\sum_{i=N-1}^{1}\text{sg}(u_\theta(\boldsymbol{x}_i,\frac{i}{N}))
\end{equation}
where $\text{sg}(\cdot)$ denotes the operation where gradients are disabled. By differentiating this expression, we obtain:
\begin{equation}
\small
    \begin{aligned}
        \frac{\partial \mathcal{J}}{\partial \boldsymbol{x}_{N} } &= \frac{\partial \mathcal{J}}{\partial \boldsymbol{x}_{0} }\frac{\partial\boldsymbol{x}_{0} }{\partial \boldsymbol{x}_{N} }\\
        &=\frac{\partial \mathcal{J}}{\partial \boldsymbol{x}_{0} }\frac{\partial \left(\boldsymbol{x}_{N}- \frac{1}{N}u_\theta(\boldsymbol{x}_N,1) - \cancel{\frac{1}{N}\sum_{i=N-1}^{1}\text{sg}(u_\theta(\boldsymbol{x}_i,\frac{i}{N}))}\right) }{\partial \boldsymbol{x}_{N} }\\
        &=\frac{\partial \mathcal{J}}{\partial \boldsymbol{x}_{0} }\frac{\partial\boldsymbol{x}_{N-1} }{\partial \boldsymbol{x}_{N} }
    \end{aligned}
\end{equation}
This result is consistent with Eq.~\ref{g_x} in the prior section, which allows for obtaining the gradient of $\boldsymbol{x}_{N}$ for the proposed SDO. This approach can also be easily extended to compute gradients for arbitrary intermediate states $\boldsymbol{x}_i$, where $i \in [1,2,...,N]$, by retaining the network computation graphs for the corresponding time steps.

Similarly, assuming $i^\prime$ is the timestep that was sampled, and that the gradient for all other timesteps is stopped, the same reasoning applies to Fig.~\ref{SDO_theta}. Specifically: 
\begin{equation}
\small
    \begin{aligned}
        \frac{\partial \mathcal{J}}{\partial \theta } &= \frac{\partial \mathcal{J}}{\partial \boldsymbol{x}_{0} }\frac{\partial\boldsymbol{x}_{0} }{\partial \theta }\\
        &=\frac{\partial \mathcal{J}}{\partial \boldsymbol{x}_{0} }\frac{\partial \left(\cancel{\boldsymbol{x}_{N}- \frac{1}{N}\sum_{i\neq i^\prime} \text{sg}(u_\theta(\boldsymbol{x}_i,\frac{i}{N}))}- \frac{1}{N}u_\theta(\boldsymbol{x}_{i^\prime},\frac{i^\prime}{N})\right) }{\partial \theta }\\
        &=-\frac{1}{N}\frac{\partial \mathcal{J}}{\partial \boldsymbol{x}_{0} }\frac{\partial u_\theta(\boldsymbol{x}_{i^\prime},\frac{i^\prime}{N}) }{\partial \theta}
    \end{aligned}
\end{equation}
This conclusion is proportional to Eq.~\ref{g_theta}, differing by only a constant factor. Thus, this formulation can be used to compute the gradient of SDO for the network parameters, and it also applies to other shared variables across timesteps, such as prompt embeddings.

Fig.~\ref{sample-figure} provides a visualization of how SDO operates. Intuitively, due to the skip dependencies of Picard iteration, SDO offers a shortcut for gradient computation in the sampling chain, effectively reducing both the computational and memory overhead.
Interestingly, although SDO discards multiple steps of Jacobian computation, our experimental results revealed an unexpected benefit: in many cases, SDO outperforms full backpropagation not only in speed but also in accuracy. This aligns with recent empirical observations of one-step gradients in prior works \cite{du2022learning, geng2021attentionbettermatrixdecomposition, fung2022jfb}.  We hypothesize that this improvement is due to the gradient shortcuts mitigating the problem of gradient explosion, leading to faster and more stable convergence (see Fig.~\ref{fig:sub1}).






\begin{figure}[h!]
    \centering
    \begin{lstlisting}
params = {'params': x_N, 'lr': lr}
optimizer = torch.optim.Adam([params])

for _ in range(epochs):  
    optimizer.zero_grad()
    x_t = x_N
    for i, t in enumerate(scheduler.timesteps):
        with torch.set_grad_enabled(i == 0):
            noise_pred = model(x_t, t)
        x_t = scheduler.step(noise_pred, t, x_t)
    output = torch.clamp(x_t, -1, 1)
    loss = J(output)
    loss.backward()
    optimizer.step()
    \end{lstlisting}
    \caption{\textbf{PyTorch implementation of SDO for optimizing $\boldsymbol{x}_N$.} The function \texttt{set\_grad\_enabled} ensures that only the one-step computation graph required for backpropagation is retained. \texttt{scheduler} stands for solver scheduling, e.g. DDIM \cite{song2020denoising}.}
    \label{SDO_x}
\end{figure}

\begin{figure}[h!]
    \centering
    \begin{lstlisting}
params = {'params': model.parameters(), 'lr': lr}
optimizer = torch.optim.Adam([params])

for _ in range(epochs):  
    optimizer.zero_grad()
    x_t = x_N
    backprop_step = random.randint(0, 
                        len(scheduler.timesteps))
    for i, t in enumerate(scheduler.timesteps):
        is_grad = (i == backprop_step)
        with torch.set_grad_enabled(is_grad):
            noise_pred = model(x_t, t)
        x_t = scheduler.step(noise_pred, t, x_t)
    output = torch.clamp(x_t, -1, 1)
    loss = J(output)
    loss.backward()
    optimizer.step()
    \end{lstlisting}
    \caption{\textbf{PyTorch implementation of SDO for optimizing model parameters.} The program samples the timesteps and retains the computational graph only for the sampled timesteps, ensuring lightweight backpropagation while enabling optimization across all timesteps.}
    \label{SDO_theta}
\end{figure}

\subsection{Theoretical Analysis of SDO}
The gradient shortcuts in SDO rely on a one-step gradient approximation for the iterative dynamics. A natural question is how accurate this approximate gradient is compared to the full raw gradient. In this section, we provide the theoretical justification for this approximation, showing that under some mild conditions, the one-step gradient used by SDO is bounded and provides a principled approximation.

\begin{theorem}
Suppose the metric $\mathcal{J}(\cdot)$ has gradients that are $\rho$-bounded. Additionally, assume that the update function $\mathcal{F}_\theta (\cdot)$ is $L_{\mathcal{F}}$-Lipschitz with respect to $\theta$ and is a contraction with constant $\lambda \in [0, 1)$. Then, the following bounds hold:
\begin{subequations}
\begin{align}
     &\| \nabla_{\boldsymbol{x}_N}\mathcal{J}(\boldsymbol{x}_{0:N})- \widehat{\nabla_{\boldsymbol{x}_N}\mathcal{J}(\boldsymbol{x}_{0:N})}\| \leq \frac{\lambda^2\rho}{1-\lambda} \nonumber\\
     &\| \nabla_{\theta}\mathcal{J}(\boldsymbol{x}_{0:N})- \widehat{\nabla_{\theta}\mathcal{J}(\boldsymbol{x}_{0:N})}\| \leq \frac{\lambda \rho L_{\mathcal{F}}}{1-\lambda} \nonumber
 \end{align}
 \end{subequations}
\end{theorem}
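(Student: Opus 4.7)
The plan is to start from the exact implicit differentiation formula (Eq.~\ref{ift}) and the one-step approximation (Eq.~\ref{impli}), subtract them, and bound the residual by expanding the matrix inverse as a Neumann series. Writing $J := \partial \mathcal{F}_\theta(\boldsymbol{x}_{0:N}^*)/\partial \boldsymbol{x}_{0:N}^*$, the two formulas differ only in their middle factor, so
$$
\nabla_{(\cdot)}\mathcal{J} \;-\; \widehat{\nabla_{(\cdot)}\mathcal{J}} \;=\; \frac{\partial \mathcal{J}}{\partial \boldsymbol{x}_{0:N}^*}\,\bigl[(I-J)^{-1}-I\bigr]\,\frac{\partial \mathcal{F}_\theta}{\partial (\cdot)}.
$$
The whole proof then reduces to bounding three operator-norm factors and composing them via submultiplicativity.

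First, I would invoke the $\lambda$-contraction assumption on $\mathcal{F}_\theta$ to obtain $\|J\| \leq \lambda < 1$. Since the spectral radius is strictly below one, the Neumann series $(I-J)^{-1} = \sum_{k\geq 0} J^k$ converges, yielding
$$
(I-J)^{-1} - I \;=\; \sum_{k=1}^{\infty} J^k, \qquad \bigl\|(I-J)^{-1}-I\bigr\| \;\leq\; \sum_{k=1}^{\infty}\lambda^k \;=\; \frac{\lambda}{1-\lambda}.
$$
This is the single reusable estimate that drives both bounds.

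Next, I would bound the two outer factors and compose. The $\rho$-bounded-gradient hypothesis on $\mathcal{J}$ gives $\|\partial \mathcal{J}/\partial \boldsymbol{x}_{0:N}^*\| \leq \rho$. For the third factor, the two cases diverge. When $(\cdot)=\theta$, the $L_{\mathcal{F}}$-Lipschitz-in-$\theta$ assumption gives $\|\partial \mathcal{F}_\theta/\partial \theta\| \leq L_{\mathcal{F}}$, and composing yields $\rho \cdot \frac{\lambda}{1-\lambda} \cdot L_{\mathcal{F}} = \frac{\lambda \rho L_{\mathcal{F}}}{1-\lambda}$. When $(\cdot)=\boldsymbol{x}_N$, the key observation is that $\boldsymbol{x}_N$ is a block-coordinate of the fixed-point sequence $\boldsymbol{x}_{0:N}$ (with the trivial last-row update $\boldsymbol{x}_N^{k+1}=\boldsymbol{x}_N^{k}$ visible in Eq.~\ref{para}), so $\partial \mathcal{F}_\theta/\partial \boldsymbol{x}_N$ is a block-column of $J$ itself, whose operator norm is bounded by $\|J\| \leq \lambda$. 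Composing $\rho \cdot \frac{\lambda}{1-\lambda} \cdot \lambda$ produces the extra $\lambda$ factor and gives $\frac{\lambda^2 \rho}{1-\lambda}$.

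The main obstacle I expect is formalizing the bound $\|\partial \mathcal{F}_\theta/\partial \boldsymbol{x}_N\| \leq \lambda$, because $\boldsymbol{x}_N$ plays two roles at once: it is simultaneously a component of the fixed-point variable and the external optimization variable. Making this precise likely requires rewriting the system as a restricted fixed-point map on $\boldsymbol{x}_{0:N-1}$ with $\boldsymbol{x}_N$ treated as an external parameter, then verifying that the induced sensitivity still inherits the $\lambda$ bound from the contraction of $\mathcal{F}_\theta$, and that Eq.~\ref{ift} remains valid in this restricted form. Beyond this bookkeeping, the remaining steps are a routine triangle/submultiplicativity calculation that immediately delivers both displayed bounds.
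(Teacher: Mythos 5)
Your proposal is correct and takes essentially the same route as the paper: both subtract the one-step gradient (Eq.~\ref{impli}) from the implicit-function-theorem gradient (Eq.~\ref{ift}), bound the middle factor $\left(I-\tfrac{\partial\mathcal{F}_\theta}{\partial\boldsymbol{x}_{0:N}}\right)^{-1}-I$ by $\tfrac{\lambda}{1-\lambda}$ (the paper via the identity $\left(I-\tfrac{\partial\mathcal{F}_\theta}{\partial\boldsymbol{x}_{0:N}}\right)^{-1}-I=\tfrac{\partial\mathcal{F}_\theta}{\partial\boldsymbol{x}_{0:N}}\left(I-\tfrac{\partial\mathcal{F}_\theta}{\partial\boldsymbol{x}_{0:N}}\right)^{-1}$, you via the equivalent Neumann series), and then compose with $\rho$ and either $L_{\mathcal{F}}$ or the extra factor $\lambda$ for the $\boldsymbol{x}_N$ case. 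The subtlety you flag about justifying $\|\partial\mathcal{F}_\theta/\partial\boldsymbol{x}_N\|\leq\lambda$ is treated implicitly in the paper as well, so your argument matches the published one in both structure and level of rigor.
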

\begin{proof}
    First,  since $\mathcal{F}_\theta$ is a contraction with constant $\lambda \in [0,1)$, we know that the operator norm $\|\frac{\partial \mathcal{F}_\theta}{ \partial \boldsymbol{x}_{0:N}}\| \leq \lambda$, implying that for any vector $v$, we have $\|(I-\frac{\partial \mathcal{F}_\theta}{ \partial \boldsymbol{x}_{0:N}})v\|\geq(1-\lambda )\|v\|$.  Since $(1-\lambda )\|v\|> 0$, for any non-zero vector $v$, it follows that $I-\frac{\partial \mathcal{F}_\theta}{ \partial \boldsymbol{x}_{0:N}}$ is invertible. 

Then, by the fact $((I-\frac{\partial \mathcal{F}_\theta}{ \partial \boldsymbol{x}_{0:N}})^{-1}-I)(I-\frac{\partial \mathcal{F}_\theta}{ \partial \boldsymbol{x}_{0:N}})=\frac{\partial \mathcal{F}_\theta}{ \partial \boldsymbol{x}_{0:N}}$, we obtain $(I-\frac{\partial \mathcal{F}_\theta}{ \partial \boldsymbol{x}_{0:N}})^{-1}-I = \frac{\partial \mathcal{F}_\theta}{ \partial \boldsymbol{x}_{0:N}}(I-\frac{\partial \mathcal{F}_\theta}{ \partial \boldsymbol{x}_{0:N}})^{-1}$.

Using this result, we can derive:
\begin{equation}
    \small
    \begin{aligned}
         &\| \nabla_{(\cdot)}\mathcal{J}(\boldsymbol{x}_{0:N})- \widehat{\nabla_{(\cdot)}\mathcal{J}(\boldsymbol{x}_{0:N})}\| \\
         &=\| \tfrac{\partial \mathcal{J}}{\partial \boldsymbol{x}_{0:N}} \left( I- \tfrac{\partial \mathcal{F}_\theta}{\partial \boldsymbol{x}_{0:N}}\right)^{-1} \tfrac{\partial \mathcal{F}_\theta}{\partial (\cdot)}  - \tfrac{\partial \mathcal{J}}{\partial \boldsymbol{x}_{0:N}}\tfrac{\partial \mathcal{F}_\theta}{\partial (\cdot)}\|\\
           &= \| \tfrac{\partial \mathcal{J}}{\partial \boldsymbol{x}_{0:N}} \left(\left( I- \tfrac{\partial \mathcal{F}_\theta}{\partial \boldsymbol{x}_{0:N}}\right)^{-1} -I\right)\tfrac{\partial \mathcal{F}_\theta}{\partial (\cdot)}\|\\
           &= \| \tfrac{\partial \mathcal{J}}{\partial \boldsymbol{x}_{0:N}} \left(\tfrac{\partial \mathcal{F}_\theta}{ \partial \boldsymbol{x}_{0:N}}(I-\tfrac{\partial \mathcal{F}_\theta}{ \partial \boldsymbol{x}_{0:N}})^{-1}\right)\tfrac{\partial \mathcal{F}_\theta}{\partial (\cdot)}\|\\
           &\leq \|\tfrac{\partial \mathcal{J}}{\partial \boldsymbol{x}_{0:N}} \| \|\tfrac{\partial \mathcal{F}_\theta}{ \partial \boldsymbol{x}_{0:N}}\| (1-\|\tfrac{\partial \mathcal{F}_\theta}{ \partial \boldsymbol{x}_{0:N}}\|)^{-1}\|\tfrac{\partial \mathcal{F}_\theta}{ \partial (\cdot)}\|\nonumber
    \end{aligned}
\end{equation}
Since $\|\tfrac{\partial \mathcal{J}}{\partial \boldsymbol{x}_{0:N}} \|\leq \rho$, $\|\tfrac{\partial \mathcal{F}_\theta}{ \partial \boldsymbol{x}_{0:N}}\|\leq \lambda$, and $\|\tfrac{\partial \mathcal{F}_\theta}{ \partial \theta}\| \leq L_{\mathcal{F}}$, we conclude: 
\begin{subequations}
  \small
\begin{align}
     \| \nabla_{\boldsymbol{x}_N}\mathcal{J}(\boldsymbol{x}_{0:N})- \widehat{\nabla_{\boldsymbol{x}_N}\mathcal{J}(\boldsymbol{x}_{0:N})}\| \leq \frac{\lambda^2\rho}{1-\lambda} \nonumber\\
     \| \nabla_{\theta}\mathcal{J}(\boldsymbol{x}_{0:N})- \widehat{\nabla_{\theta}\mathcal{J}(\boldsymbol{x}_{0:N})}\| \leq \frac{\lambda \rho L_{\mathcal{F}}}{1-\lambda} \nonumber
 \end{align}
 \end{subequations}
Therefore, the conclusion is confirmed.
\end{proof}
The two inequalities above provide bounds on the error of the approximate gradient compared to the true gradient. It is evident that these bounds depend on a factor $\lambda$, which governs the convergence rate of the diffusion Picard iteration. As demonstrated in \cite{shih2024parallel, tang2024accelerating, luparasolver}, parallel denoising empirically converges within just a few iterations, which suggests $\lambda$ is small and SDO offers a good approximation of the true gradient. Meanwhile, the gradient shortcuts in SDO mitigate numerical instability that may arise from backpropagation in lengthy diffusion trajectories.

\subsection{Comparison with Existing Methods}

To better understand the contributions of our proposed Shortcut Diffusion Optimization (SDO), we compare it with several existing methods in the literature that address the challenges of backpropagation and optimization in diffusion models. These methods include approaches for efficient gradient computation, diffusion guidance, and gradient truncation.

\textbf{Relation to Existing Backpropagation Methods in Diffusion Models.} Our work is closely related to DOODL \cite{wallace2023end} and AdjointDPM \cite{panadjointdpm}, pioneering studies on backpropagation in diffusion sampling. These methods focus on perfectly inverting the forward process to compute gradients while maintaining the constant memory cost. DOODL achieves this through reversible modules, while AdjointDPM employs the adjoint sensitivity method. In contrast, our approach demonstrates that a perfect reverse process is unnecessary. Instead, a one-step backpropagation suffices, significantly improving computational efficiency compared to these methods. 

\textbf{Comparison with Diffusion Guidance Techniques.} Several methods aim to guide diffusion generation, including UGD \cite{bansal2023universal} and Freedom \cite{yu2023freedom}. While these techniques are effective in specific scenarios, our framework surpasses them in both flexibility and performance. These approaches typically rely on a single-step Tweedie estimation for clean prediction \cite{chung2022diffusion}, which can often result in blurry or unrealistic outputs. This is particularly problematic when downstream metrics $\mathcal{J}(\cdot)$ are provided by another neural network, as it can lead to out-of-distribution issues and unstable gradients \cite{wallace2023end, xu2024consistency}, requiring careful tuning of the guidance strength. Beyond performance, our framework offers greater versatility, supporting a wide range of tasks such as optimizing model parameters, prompt embeddings, and generating adversarial samples, thus making it a more comprehensive solution for diffusion model-based applications.

\textbf{Distinctions from Previous Gradient Truncation Approaches.} The principle of SDO is inspired by recent advancements in differentiating through fixed-point dynamics \cite{fung2022jfb,bolte2024one}, which suggest performing backpropagation only during the final step of the iterative process. We extend this principle to the context of diffusion Picard iteration. Due to the skip dependencies inherent in Picard iteration, gradients for variables at any timestep can still be calculated, even when only the final iteration is retained, effectively creating a gradient shortcut. Some works on tuning diffusion parameters \cite{clark2023directly,li2024physics} have proposed truncating backpropagation to the final discrete ODE timestep. However, it is important to emphasize that their truncation approach fundamentally differs from ours. While their method focuses on truncating gradients at the level of discrete ODE timesteps, our approach operates at the level of Picard iteration steps. A key limitation of their approach is the inability to compute gradients for variables at earlier timesteps. In contrast, our method leverages the structural properties of Picard iteration, offering a principled solution that preserves the ability to compute gradients across all timesteps.

\section{Application}\label{app}
In this section, we apply AdjointDPM to perform several interesting tasks, such as optimizing latent states and model weights for control sampling and specific reward alignment. All experiments were conducted on a single NVIDIA A100 GPU. The implementations
utilized Python 3.8 and PyTorch 1.11 as the primary software stack. The experimental results demonstrate that our method effectively backpropagates metric information to the relevant variables of the generated image. Next, we present the detailed experimental setup and results for each task.

\subsection{Controlled Generation by Latent Optimization}\label{CD}
We begin by optimizing the latent variables within the sampling chain, enabling fine-grained control over individual images.

\subsubsection{Text-Guided Image Manipulation}
\emph{\textbf{Experimental setup.}} Given an image $\boldsymbol{x}_{\text{ref}}$ and a text prompt $\rho$, we consider the following losses:
\begin{equation}
    \mathcal{J}_{\text{text}}=\lambda S_{\text{CLIP}}(\boldsymbol{x}_{0}, \rho)+(1-\lambda)\|\boldsymbol{x}_{0}- \boldsymbol{x}_{\text{ref}}\|
\end{equation}
where the CLIP score $S_{\text{CLIP}}$ optimizes the alignment of the generated image with the prompt \cite{liu2021fusedream}. The second term constrains the optimization to prevent excessive deviation from the original image. The parameter $\lambda$ controls the trade-off between the textual alignment term $S_{\text{CLIP}}$ and the fidelity term that ensures similarity to the original image.  We use the checkpoint of the pixel-space diffusion model provided in \cite{chung2022diffusion}, with sampling performed across 50 steps using DDIM scheduling \cite{song2020denoising}. We conduct experiments on CelebA-HQ \cite{karras2017progressive} and FFHQ \cite{karras2019style}. Optimization is performed over 50 steps using the Adam optimizer \cite{kingma2014adam} with a learning rate of 0.01. The image corresponding to the lowest loss value during the optimization process is recorded as the final result. For attribute manipulation, we set $\lambda = 0.5$, while for other text prompts, we set $\lambda = 0.7$.

\emph{\textbf{Result.}} Fig.~\ref{text_guide} presents qualitative results, demonstrating text-guided edits to facial attributes, appearance, and style. These results indicate that the CLIP loss gradient provided by SDO can effectively guide diffusion models trained solely on facial data to follow diverse text prompts. For quantitative comparisons, we randomly selected 100 samples from each dataset using the following prompts: \{\texttt{old, sad, smiling, angry, curly, Pixar}\}. We compute various metrics to evaluate the performance of each method. The comparison methods include two full backpropagation techniques, AdjointDPM \cite{panadjointdpm} and DOODL \cite{wallace2023end}, as well as two image manipulation baselines: DiffusionCLIP \cite{Kim_2022_CVPR} and FlowGrad \cite{liu2022flow}. The metrics used include the CLIP score, LPIPS \cite{zhang2018unreasonable}, and face ID loss \cite{deng2019arcface}. The CLIP score measures the alignment with the prompt, while the latter two metrics assess the magnitude of changes before and after editing. The quantitative results are presented in Tab.~\ref{tab:1}. We find that SDO significantly outperforms the other two backpropagation algorithms and also provides advantages over the baseline image manipulation methods. Additionally, Fig.~\ref{text_guide_com} shows visual comparisons, where FlowGrad sometimes fails to capture cues such as "\texttt{Angry}", while DiffusionCLIP is over-optimized and does not maintain the original facial features effectively. In contrast, SDO strikes a better balance between the two aspects.

\begin{figure}[!ht]
		\centering 
		\includegraphics[width=\columnwidth]{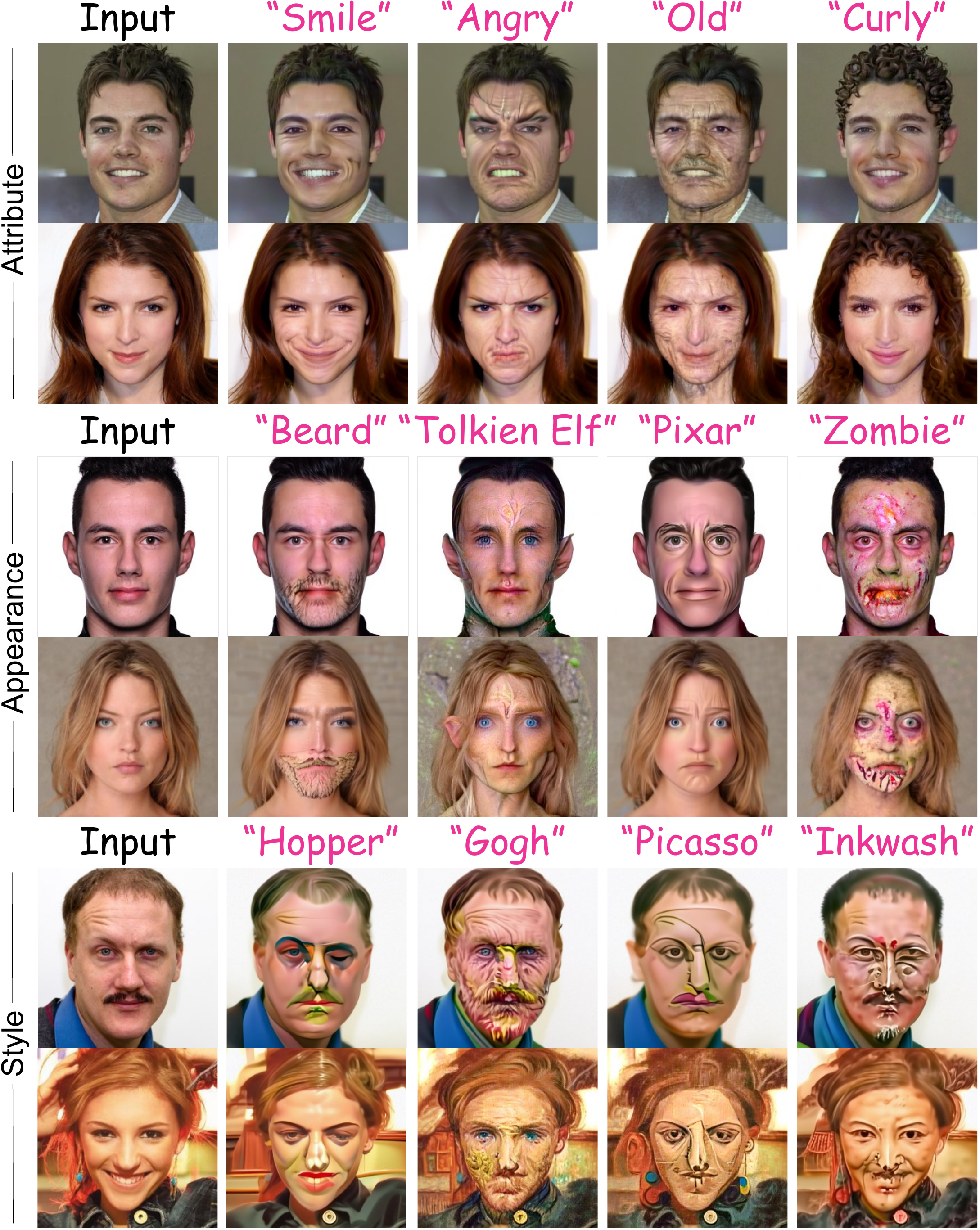}
		\caption{Qualitative results of text-guided image manipulation on samples from CelebA-HQ \cite{karras2017progressive} and FFHQ \cite{karras2019style}. Using text prompts, SDO enables edits to facial attributes, appearance, and style in the images.}  \label{text_guide}
\end{figure}

\begin{figure}[!ht]
		\centering 
		\includegraphics[width=0.9\columnwidth]{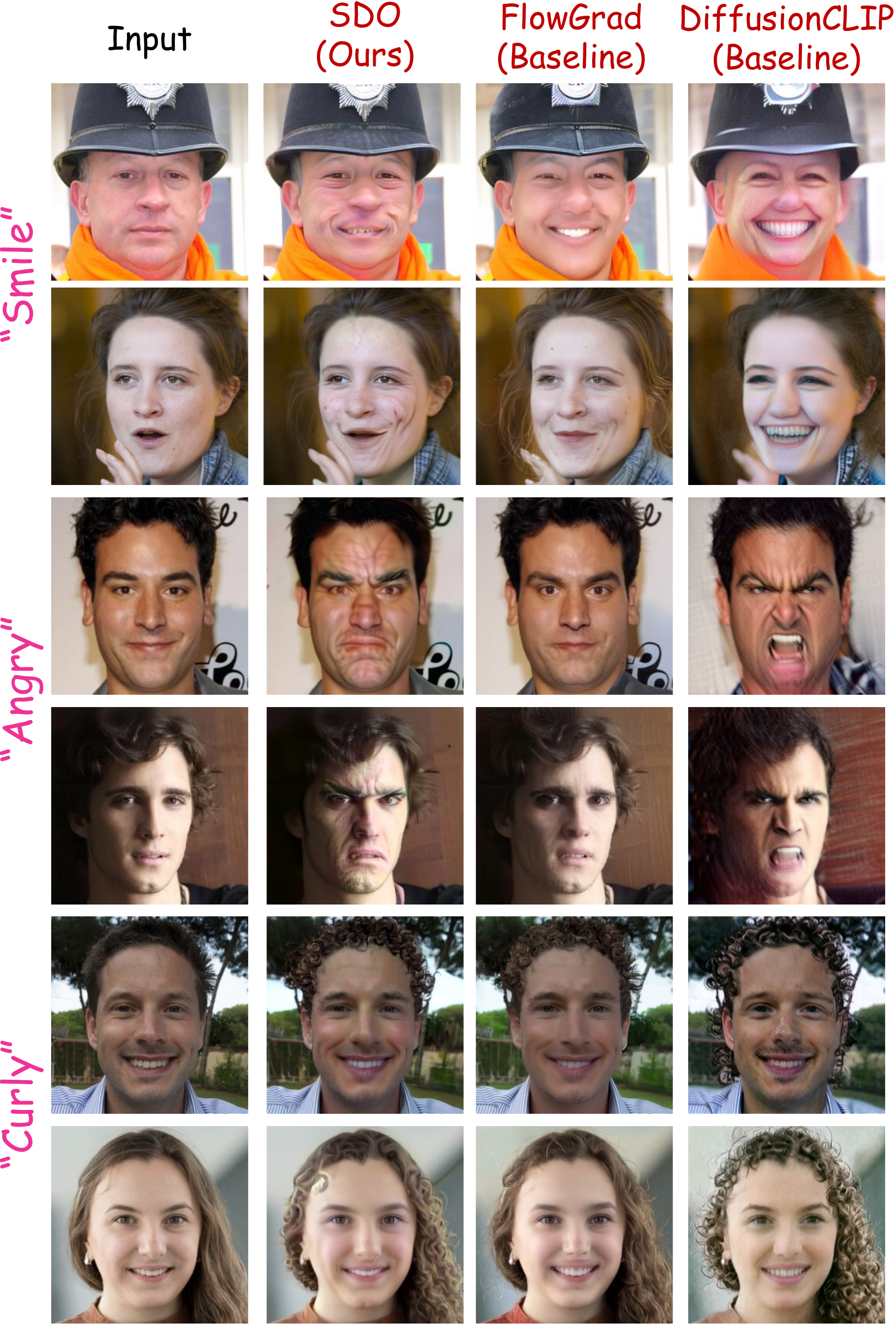}
		\caption{Comparing methods for text-guided image manipulation, including SDO (Ours), DiffusionCLIP \cite{Kim_2022_CVPR} and FlowGrad \cite{liu2023flowgrad}. SDO can follow textual cues well compared to these two and preserve face features.}  \label{text_guide_com}
\end{figure}

\begin{table}[!h]
  \centering
    \caption{Quantitative comparison of text-guided image manipulation between different algorithms.}
    \centering\resizebox{0.44\textwidth}{!}{
      \begin{tabular}{lccc}
    \toprule
    Method & LPIPS$\downarrow$ & CLIP$\uparrow$ & ID loss$\downarrow$\\
    \midrule
    DiffusionCLIP \cite{Kim_2022_CVPR}&0.175&29.93&0.901 \\
    FlowGrad  \cite{liu2023flowgrad}& 0.142&31.30& 0.797 \\
    \midrule
    AdjointDPM \cite{panadjointdpm}& 0.188&28.14&0.908 \\
    DOODL \cite{wallace2023end}&0.183&28.76&0.914\\
    \rowcolor{blue!10}
    SDO (Ours) &\bf 0.134 &\bf 32.18&\bf 0.790\\
    \bottomrule
  \end{tabular}}
    \label{tab:1}
    \end{table}
    \begin{table}[!h]
      \caption{Quantitative comparison of style-guided image generation.}
    \centering\resizebox{0.35\textwidth}{!}{
\begin{tabular}{lcc}
      \toprule
      Method & Style loss$\downarrow$ & CLIP$\uparrow$ \\
      \midrule
      UGD \cite{bansal2023universal}& 791.4& 28.50\\
      FreeDoM \cite{yu2023freedom}& 463.9 & 27.04\\
      \midrule
      AdjointDPM \cite{panadjointdpm}&  396.7& 31.03\\
      DOODL \cite{wallace2023end}& 357.2&\bf 31.12\\
      \rowcolor{blue!10}
      SDO (Ours) &\bf 268.1 & 30.78 \\
      \bottomrule
    \end{tabular}}
    \label{tab:2}
\end{table}

\begin{figure}[!ht]
		\centering 
		\includegraphics[width=\columnwidth]{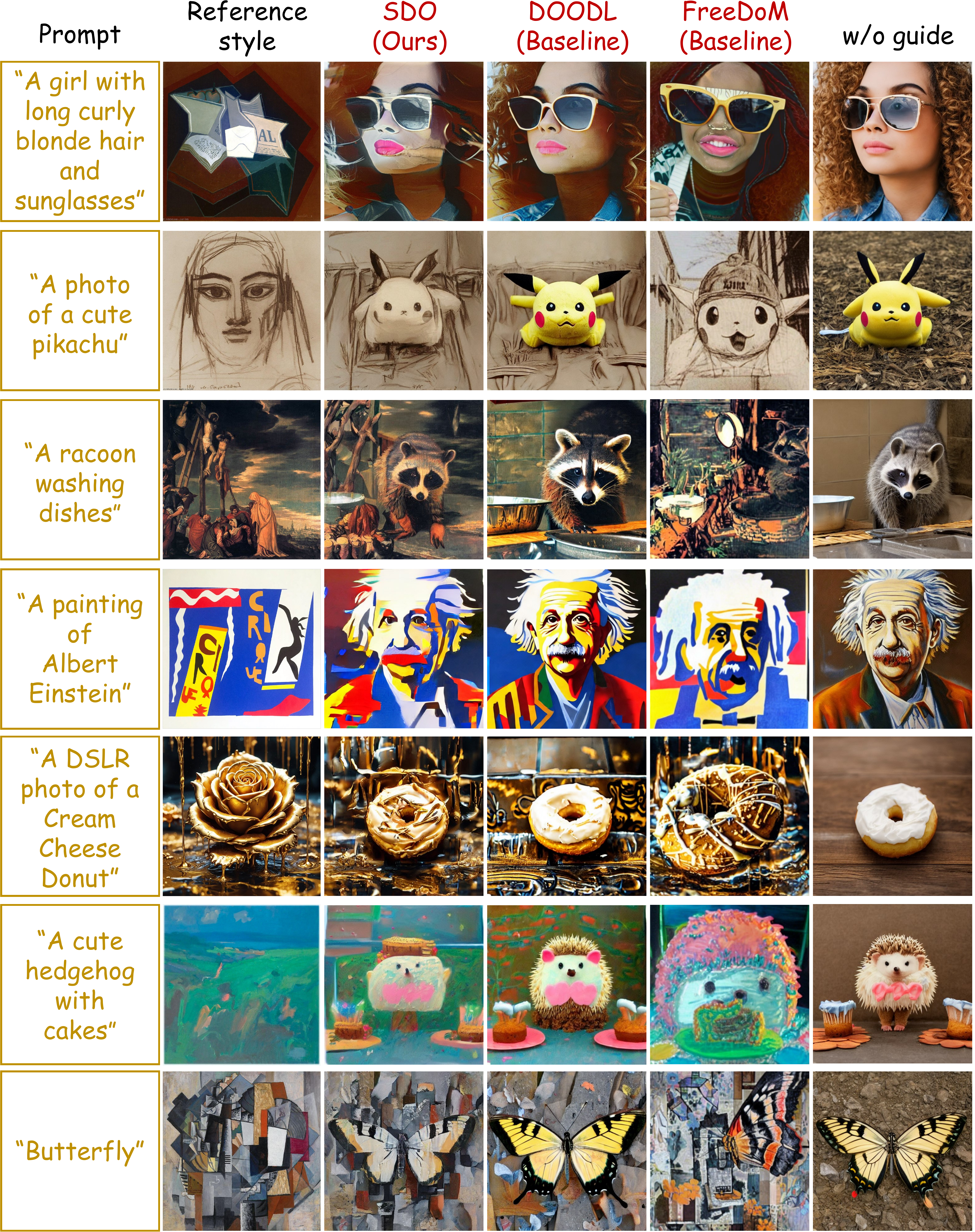}
		\caption{Comparison of style-guided generation between SDO, DOODL \cite{wallace2023end}, and FreeDoM \cite{yu2023freedom}. SDO preserves the semantics of unguided images while offering significant improvements in image quality and stylistic alignment.}  \label{style_guide}
\end{figure}

\subsubsection{Style-Guided Image Generation}
 \emph{\textbf{Experimental setup.}} To customize the style of generated images, we minimize the Gram matrix difference \cite{johnson2016perceptual} between the generated image and the reference image. To quantify the texture style similarity between the two images, we employ the following metric: 
 \begin{equation}
\mathcal{J}_{\text{dist}}=\left\|\mathcal{E}\left(\boldsymbol{x}_{\text {ref}}\right)_j-\mathcal{E}\left(\boldsymbol{x}_{0}\right)_j\right\|_F^2 
\end{equation}
where $\boldsymbol{x}_{\text {ref}}$ represents the reference image for style guidance, and $\mathcal{E}(\cdot)_j$ denotes the Gram matrix \cite{johnson2016perceptual} of the  $j$-th  layer feature map of an image encoder. In our experiments, we select features from the third layer of the CLIP image encoder, which yielded satisfactory results. We optimize the loss using the Adam optimizer for 150 steps, with sampling performed using DDIM for 50 steps. Notably, we observe that optimizing the latent variable at $t = 600$ (out of a total of 1000 steps) better preserved the semantic information of the original image compared to directly optimizing the initial latent. For quantitative comparisons, we use style images from \cite{yu2023freedom}, along with a curated selection from the WikiArt dataset \cite{saleh2015large}, resulting in a total of 15 reference style images. From these references, 30 images were generated using diverse prompts to facilitate both quantitative and qualitative evaluation.
 
  \emph{\textbf{Result.}} Tab.~\ref{tab:2} displays the quantitative results, where SDO significantly outperforms alternatives, including guidance-based approaches such as UGD \cite{bansal2023universal} and FreeDoM \cite{yu2023freedom}. Fig.~\ref{style_guide} presents some visual examples, showing that SDO clearly produces images consistent with the style of the reference image. In contrast, while DOODL performs full backpropagation to calculate precise gradients, it struggles to effectively integrate metric information into the latent variables of the generated image due to the lengthy sampling trajectory. Another unique advantage of our optimization method is that it better preserves the semantic information of the original image, a feature not guaranteed by methods such as FreeDoM \cite{yu2023freedom}.

We separately calculate two diffusion models: one for pixel space \cite{chung2022diffusion} and one for latent space \cite{rombach2022high}. We measure the time and memory required for SDO, Adjoint, and DOODL to compute the gradient of a noisy variable once. Tab.~\ref{tab:time} shows the gradient computation costs. Compared to full backpropagation, SDO achieves approximately a 90\% reduction in computation time and a 35\% reduction in memory usage. This demonstrates the significant efficiency advantage of our method and has great potential to facilitate the practical application of diffusion generation modeling. 


\begin{table}[ht]
  \caption{Comparison of the runtime and memory required to compute the gradient for $\boldsymbol{x}_N$ once using pixel and latent diffusion models on an A100 GPU. The efficiency improvement over baseline backpropagation approaches is highlighted in \textcolor{plainbackprop}{red}.}
    \centering\resizebox{0.5\textwidth}{!}{
    \begin{tabular}{l@{\hskip 7pt}c@{\hskip 7pt}>{\columncolor{blue!10}}c@{\hskip 7pt}c@{\hskip 7pt}c}
        \toprule
         &&SDO (Ours) &AdjointDPM &DOODL \\
                  \midrule
         \multirow{4}{*}{Pixel}&Time (s)& 1.62& 18.73 & 29.96 \\
         &&\textcolor{plainbackprop}{($-$91.35\%)} &- & -\\
         &Memory (GB)& 6.12  & 9.37 & 11.25\\
         &&\textcolor{plainbackprop}{($-$34.68\%)} & - &-\\
         \midrule
         \multirow{4}{*}{Latent}&Time (s)& 1.91 & 17.12 & 27.20 \\
         &&\textcolor{plainbackprop}{($-$88.84\%)} & -& -\\
         &Memory (GB)& 10.33  & 15.88 & 18.59\\
         &&\textcolor{plainbackprop}{($-$34.95\%)} & - & -\\
         \bottomrule
    \end{tabular}}
    \label{tab:time}
\end{table}

\begin{table}[!h]
  \centering
      \caption{Quantitative comparison of aesthetic enhancement.}
    \centering\resizebox{0.32\textwidth}{!}{
    \begin{tabular}{lc}
      \toprule
      Method & Aesthetic score$\uparrow$ \\
      \midrule
      SD v1.4 \cite{rombach2022high}&6.27 \\
       UGD \cite{bansal2023universal}& 8.13\\
      FreeDoM \cite{yu2023freedom}& 8.74 \\
      \midrule
      AdjointDPM \cite{panadjointdpm}& 8.96 \\
      DOODL \cite{wallace2023end}&9.02 \\
      \rowcolor{blue!10}
      SDO (Ours) &\bf 9.13 \\
      \bottomrule
    \end{tabular}}
    \label{tab:3}
    \end{table}
\subsubsection{Aesthetic Enhancement}
\emph{\textbf{Experimental setup.}} We now focus on enhancing the aesthetics of generated images using an aesthetic predictor \cite{pressman2022simulacra} that outputs a score between 1 and 10. To improve the aesthetic score of an image, we employ the following loss function:
\begin{equation}
\mathcal{J}_{\text{aes}}= \left|a\left(\boldsymbol{x}_0\right)-10\right| 
\end{equation}
where $a(\boldsymbol{x}_0)$ is the aesthetic predictor, which outputs values within the range of $[1, 10]$. The goal is to maximize the score, targeting a value of 10. Similar to style-guided generation, we optimize the latent variables at $t=600$ to better preserve the semantic information of the original image. Additionally, aesthetic enhancement can be applied to real-world images by encoding them into latent variables via DDIM inversion \cite{song2020denoising}. For real-world images, we use DDIM inversion to encode the image into the latent space at $t=600$ before performing the aesthetic enhancement. Optimization is conducted over 100 steps using the Adam optimizer with a learning rate of 0.01. For quantitative evaluation, we employ two types of prompts: \texttt{painting} and \texttt{photo}. We randomly select 20 prompts from each category, and the optimized aesthetic scores of the generated images are computed for comparison.

\emph{\textbf{Result.}} We compare the resulting aesthetic scores of all generated images with baseline models such as Stable Diffusion (SD) v1.4, UGD \cite{bansal2023universal}, FreeDoM \cite{yu2023freedom}, AdjointDPM \cite{panadjointdpm}, and DOODL \cite{wallace2023end} in Tab.~\ref{tab:3}. As shown in the table, SDO proves to be more effective in guiding the diffusion model to generate images with higher aesthetic scores. This demonstrates the effectiveness of SDO in disseminating gradient information. Additionally, Fig.~\ref{aes_guide} provides visual examples before and after optimization. We find that this method not only improves the aesthetic scores of the generated images but also successfully works on real-world images, producing satisfactory results.

\begin{figure}[!ht]
		\centering 
		\includegraphics[width=\columnwidth]{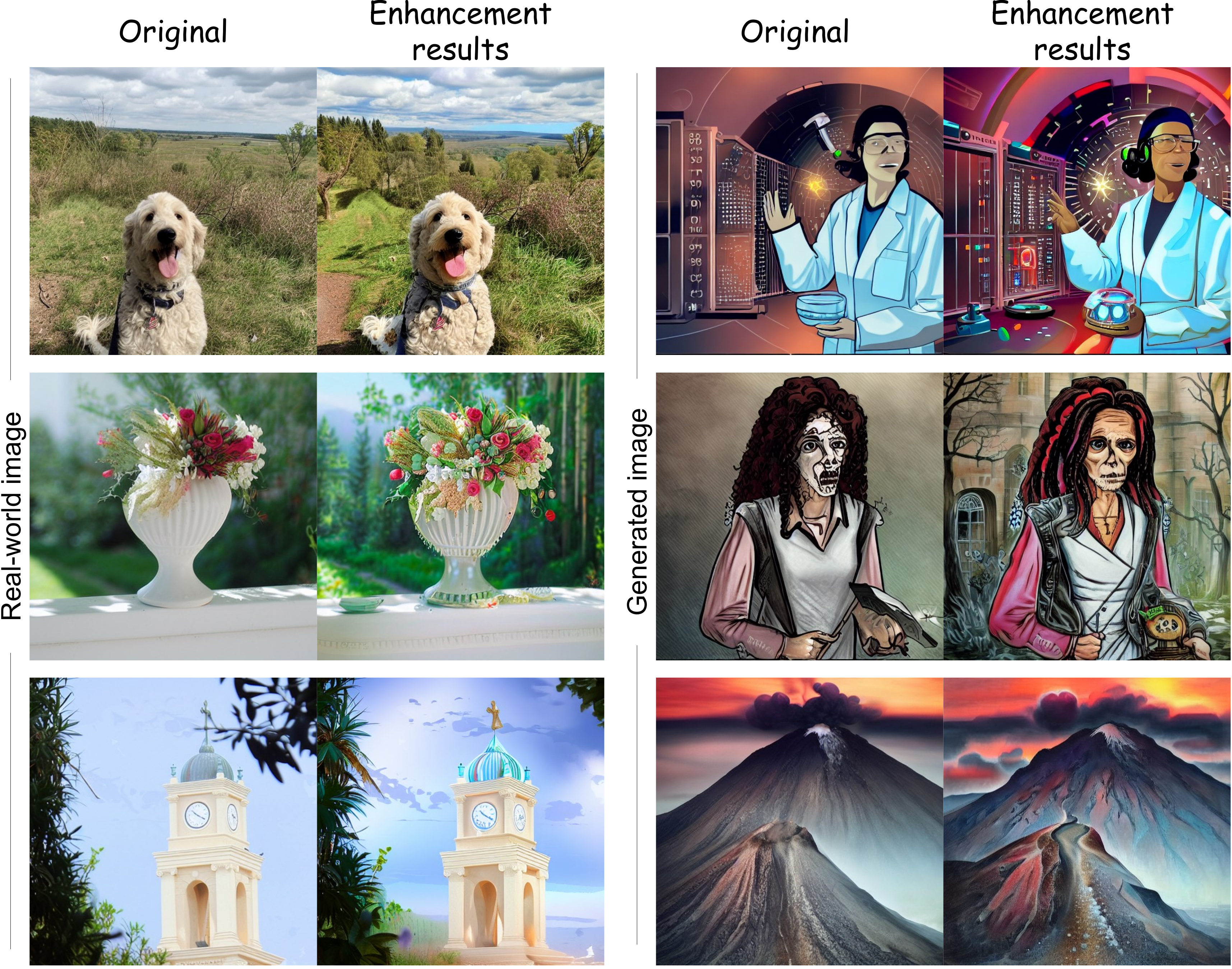}
		\caption{Qualitative results of aesthetic enhancement applied to both real-world images (left) and generated images (right).}  \label{aes_guide}
\end{figure}

\begin{figure}[!ht]
		\centering 
		\includegraphics[width=\columnwidth]{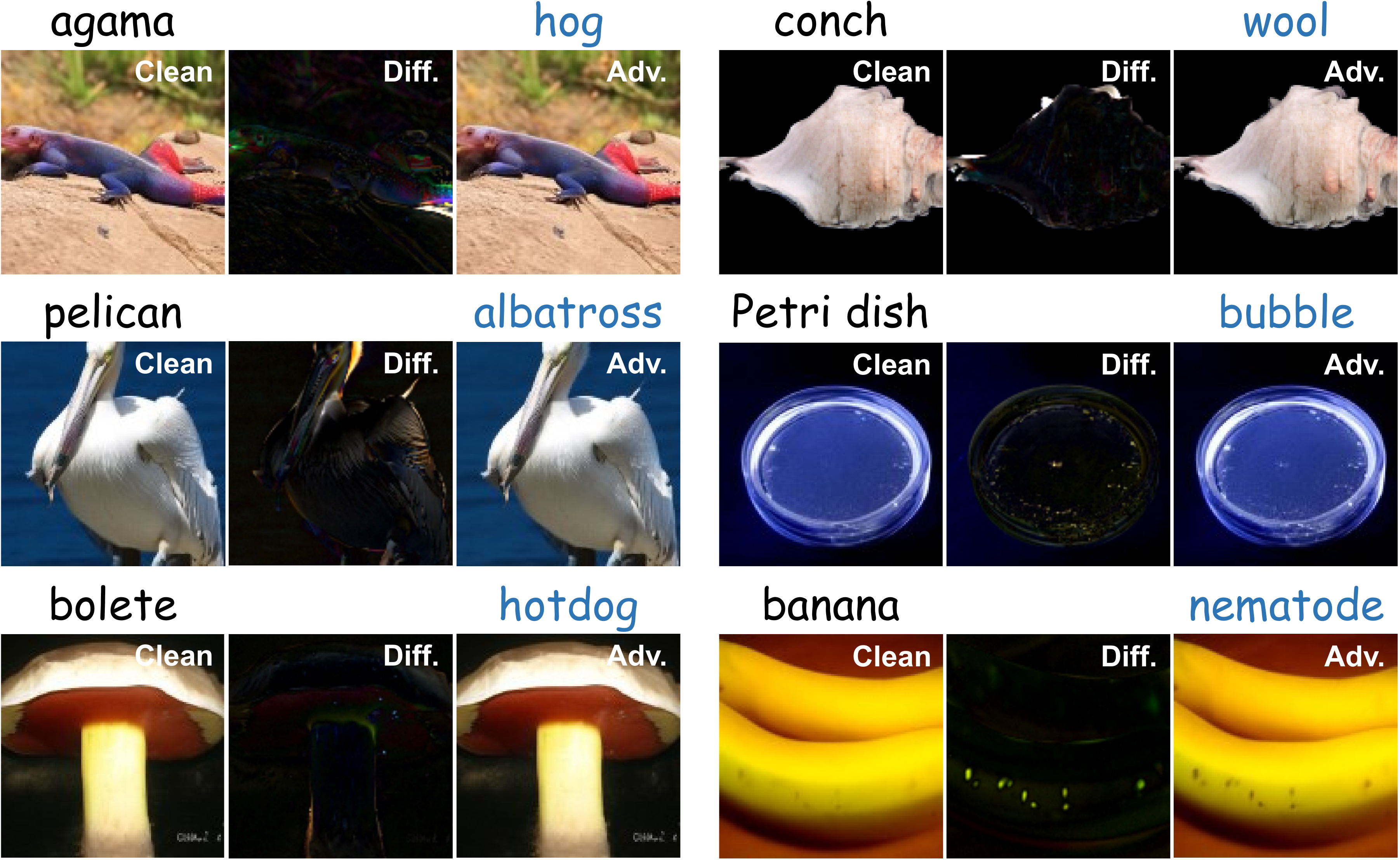}
		\caption{Adversarial examples targeting the ResNet50 \cite{he2016deep} classifier trained on ImageNet \cite{deng2009imagenet}. From left to right, each set shows a clean sample, the amplified ($\times 5$) difference before and after adversarial perturbation, and the resulting adversarial sample.}  \label{attack_guide}
\end{figure}

\begin{figure*}[!t]
  \centering
  \begin{subfigure}[b]{0.335\textwidth}
\includegraphics[width=\textwidth]{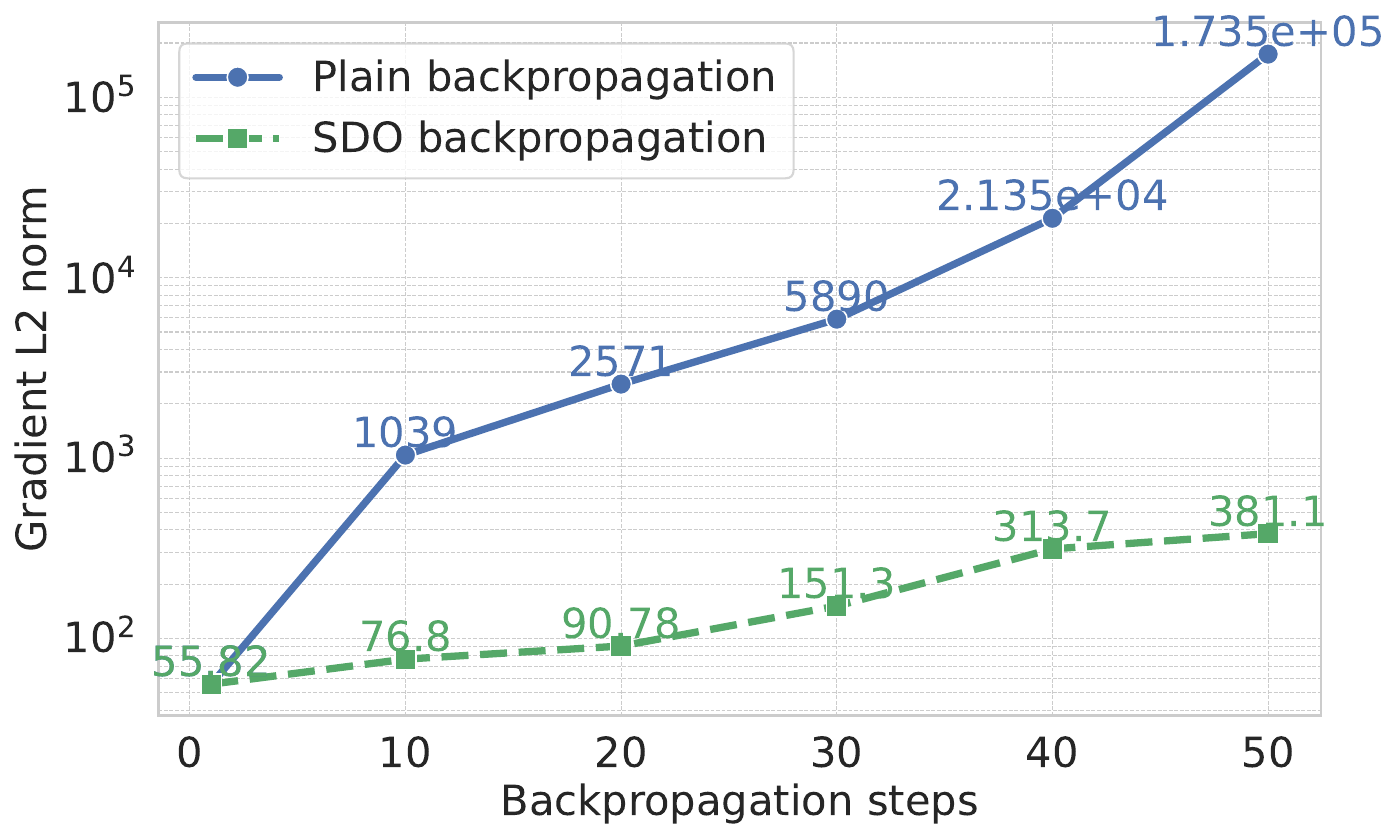}
    \caption{Gradient L2 norms.}
    \label{fig:sub1}
  \end{subfigure}
  \hfill 
  \begin{subfigure}[b]{0.315\textwidth}
\includegraphics[width=\textwidth]{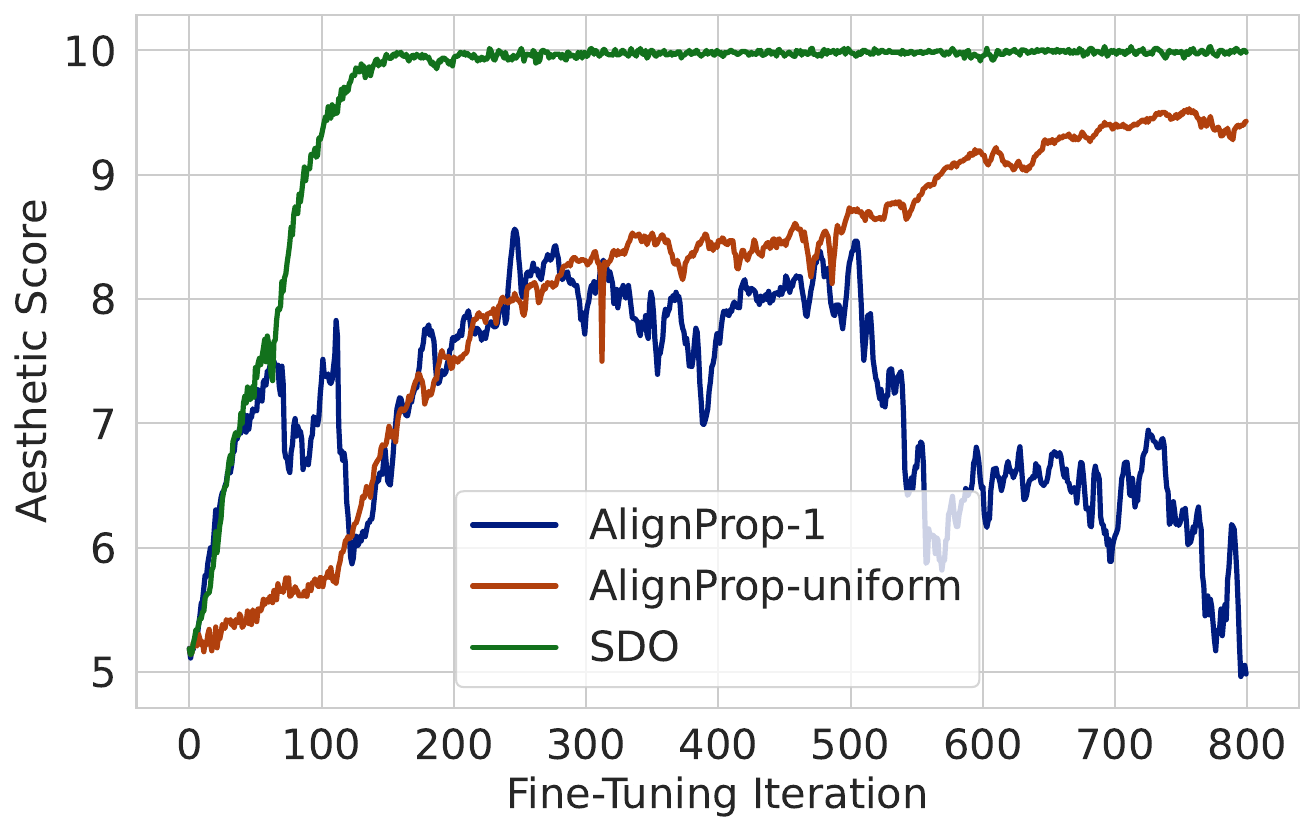}
    \caption{Aesthetic score.}
    \label{fig:sub2}
  \end{subfigure}
    \hfill 
  \begin{subfigure}[b]{0.325\textwidth}
    \includegraphics[width=\textwidth]{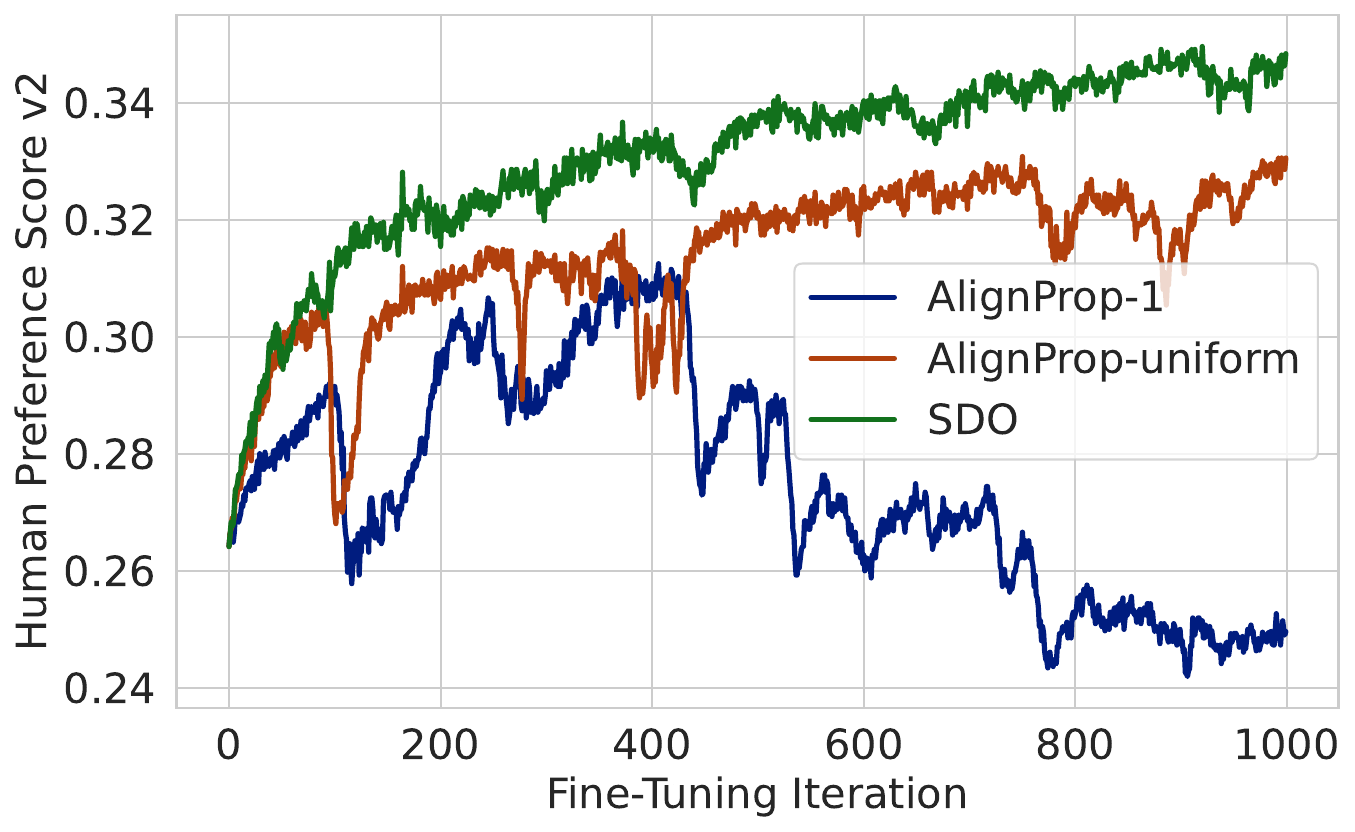}
    \caption{Human Preference Score v2.}
    \label{fig:sub3}
  \end{subfigure}
\caption{(a) Gradient norms of parameters in plain backpropagation explode as steps increase, potentially causing optimization challenges. (b) and (c) illustrate the changes in Aesthetic and HPSv2 scores during the fine-tuning process, where these metrics serve as reward functions. SDO demonstrates greater empirical stability and faster convergence.}
  \label{fig:curve}
\end{figure*}

\subsubsection{Adversarial Sample Generation}
 \emph{\textbf{Experimental setup.}}  Prior work \cite{panadjointdpm} has shown that existing pretrained diffusion models (DMs) can produce content capable of bypassing audit filters, such as $f_{\text{cls}}$. To generate adversarial samples, we first use conditional generation to produce samples from a specific class. Subsequently, the noisy latent corresponding to the generated samples is optimized to misclassify the sample. This can be formulated as an optimization problem over the adversarial initial noise:
\begin{equation}
    \max_{\|\delta\|_{\infty}<\tau} \mathcal{J}_{\text{dis}}\left(c, f_{\text{cls}}\left(G\left(\boldsymbol{x}_N+\delta, \epsilon_\theta,c\right)\right)\right) 
\end{equation}
where $c$ represents the harmful condition, and $\mathcal{J}_{\text{dis}}$ measures the distance between the generated content and harmful signals. The latent variables are optimized to maximize this distance, ensuring that the adversarial samples bypass the classifier. To preserve the condition information of the original image, the perturbation of the latent variable is constrained by $\|\delta\|_{\infty} < \tau$. For quantitative comparisons, we use the models pre-trained on ImageNet $128\times128$ \cite{dhariwal2021diffusion} and employed classifier guidance to generate 100 samples for each class. A total of 13 random classes were selected for optimization, with the goal of bypassing a trained ResNet50 \cite{he2016deep} classifier. After excluding samples that were already misclassified by the classifier, 1,179 samples remained for the experiment. Sampling was performed over 30 discrete steps, and optimization was conducted over 30 steps using gradient descent with a learning rate of 0.1. After each optimization step, the perturbation $\delta$ was clipped to the range $[-0.1, 0.1]$. 

Following optimization, we measure the success rate of bypassing the classifier and compute the LPIPS metric to assess perceptual similarity with the original image. Additionally, we calculate the Fréchet Inception Distance (FID) \cite{heusel2017gans} of adversarial samples to measure image quality. FID was computed between the generated images and the ImageNet validation set.

\begin{figure*}[!t]
		\centering 
		\includegraphics[width=\textwidth]{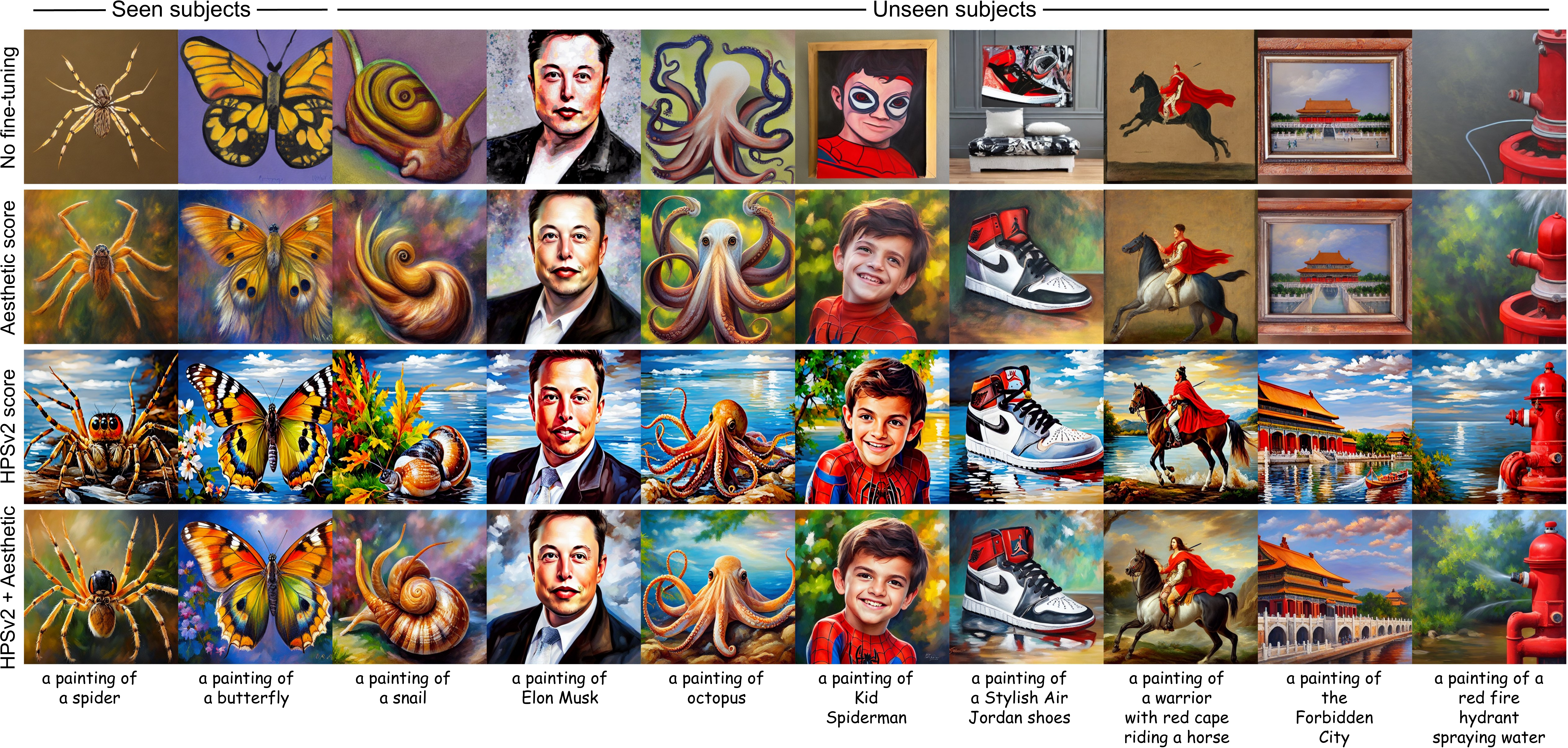}
\caption{\textbf{Comparison of generated images from fine-tuned models using the same seeds.} Each row, from top to bottom, displays results for: no fine-tuning, fine-tuning on Aesthetic score, fine-tuning on HPSv2, and a mixture of both LoRA weights. The model fine-tuned on a simple prompt set also demonstrates the ability to generalize to previously unseen subjects.} \label{reward_fine}
\end{figure*}

 \emph{\textbf{Result.}} As shown in Tab.~\ref{tab:4}, our SDO method outperforms the baseline on this task. Specifically, most of the SDO-optimized samples successfully bypass the classifier. In contrast, the full backpropagation methods achieve an attack success rate of approximately 50\%. This demonstrates the struggle of full backpropagation methods, as they fail to effectively propagate the information from the classifier to the latent variables. More importantly, we find that SDO optimization exhibits significantly less changes (indicated by a lower LPIPS metric) and is accompanied by less degradation in image quality (reflected in a lower FID metric). Fig.~\ref{attack_guide} shows examples of the generated adversarial samples and the differences before and after optimization. Although these images show only minor changes, they successfully achieve adversarial goals. These results raise concerns about the potential for generative models to produce harmful samples, underscoring the need for the development of more advanced protective tools in the future \cite{qu2023unsafe}.

    \begin{table}[!h]
        \caption{Average success rate (\%) of the adversarial initial state for bypassing the classifier, along with FID and LPIPS values before and after optimization.}
 \centering\resizebox{0.45\textwidth}{!}{
    \begin{tabular}{lccc}
      \toprule
      Method & Success ratio$\uparrow$ & LPIPS$\downarrow$& FID$\downarrow$\\
      \midrule
      AdjointDPM \cite{panadjointdpm}& 52.54&0.098& 85.3 \\
      DOODL \cite{wallace2023end}&51.61&0.084& 70.4\\
      \rowcolor{blue!10}
      SDO (Ours) &\bf 96.32 &\bf 0.023&\bf 58.2\\
      \bottomrule
    \end{tabular}}
    \label{tab:4}
\end{table}

\subsection{Fine-tuning for Reward Alignment}\label{RA}

Beyond controlling individual images, we can also fine-tune the parameters of the DM so that it does not need to be optimized again when generating new images. Here, we address the problem of aligning a pre-trained text-to-image DM with a specific reward to improve output quality.

\emph{\textbf{Experimental setup.}} Following the approach in AlignProp \cite{prabhudesai2023aligning}, we perform end-to-end fine-tuning of a differentiable reward by directly backpropagating along the sampling chain in Stable Diffusion (SD). The objective of reward fine-tuning is mathematically expressed as: 
\begin{equation}
\mathcal{J}_{\text{reward}}=\mathbb{E}_{\rho , \mathbf{x}_T \sim \mathcal{N}(\mathbf{0}, \mathbf{I})}\left[r\left(G\left(\boldsymbol{x}_N, \epsilon_\theta, \rho \right), \rho\right)\right] 
\end{equation}
where prompt $\rho$ is sampled from a predefined prompt set. 

We consider two types of rewards: aesthetic score \cite{pressman2022simulacra} and Human Preference Score v2 (HPSv2) \cite{wu2023human}. Instead of fine-tuning the weights of the original DM, we adopt the approach from \cite{prabhudesai2023aligning, clark2023directly} by adding low-rank adapter (LoRA) \cite{hu2021lora} modules for fine-tuning. To assess the efficiency of the gradient shortcut provided by SDO, we compare it against the following baselines: \textbf{AlignProp-uniform}: As noted in \cite{prabhudesai2023aligning}, full backpropagation can lead to mode collapse, so this approach applies a random truncated backpropagation strategy with uniformly selected truncated timesteps. \textbf{AlignProp-1}: This method only performs backpropagation at the last timestep to alleviate the computation cost, as used in \cite{clark2023directly} and \cite{li2024physics}.

We compare the baseline methods with our proposed SDO algorithm under a similar experimental setup. Specifically, we use Stable Diffusion v1.5 as the base diffusion model. Images were generated using a 50-step DDIM scheduler and classifier-free guidance with a weight of 7.5. For training, we used the AdamW optimizer \cite{loshchilov2017decoupled} with $\beta_1 = 0.9$, $\beta_2 = 0.999$, and a weight decay of $1e-4$. The learning rate was uniformly set to $4e-4$. To minimize memory usage during training, the pre-trained Stable Diffusion parameters were converted to bfloat16, while the LoRA parameters were retained in float32 precision. Notably, we apply LoRA only to the parameters of the score network, excluding the text encoder and VAE decoder. The batch size was set to 16 to fit within our computational resources, and gradient accumulation was performed over four iterations. For both reward functions, we use a prompt set containing 45 animals \cite{black2023training} with the format \emph{"a painting of a(n) [animal]"} and performed 1,000 steps of fine-tuning.

\emph{\textbf{Result.}} We first investigate the relationship between backpropagation depth and parameters in the sampling trajectory. The results are shown in Fig.~\ref{fig:sub1}, where we observe that the norm of the gradient generated by naive backpropagation increases sharply with the number of backward steps, leading to gradient explosion, which can cause optimization instability. In contrast, the gradient produced by SDO remains stable, regardless of the depth of backpropagation.

Next, we examine the fine-tuning process for reward alignment across different algorithms. Figs.~\ref{fig:sub2} and \ref{fig:sub3} show the reward curves during the fine-tuning process. We observe that AlignProp-1 becomes unstable after a certain number of optimization steps, struggling to generalize effectively to earlier timesteps, which impacts the semantics of the generated images. Additionally, we find that SDO requires only about 50\% of the runtime compared to AlignProp-uniform. Despite using gradient clipping to alleviate potential instability, as shown in Fig.~\ref{fig:sub1}, SDO converges faster than AlignProp-uniform, even with the same number of fine-tuning steps. We hypothesize that this is due to SDO's ability to mitigate gradient explosion through the gradient shortcut.

\begin{figure}[!ht]
		\centering 
		\includegraphics[width=\columnwidth]{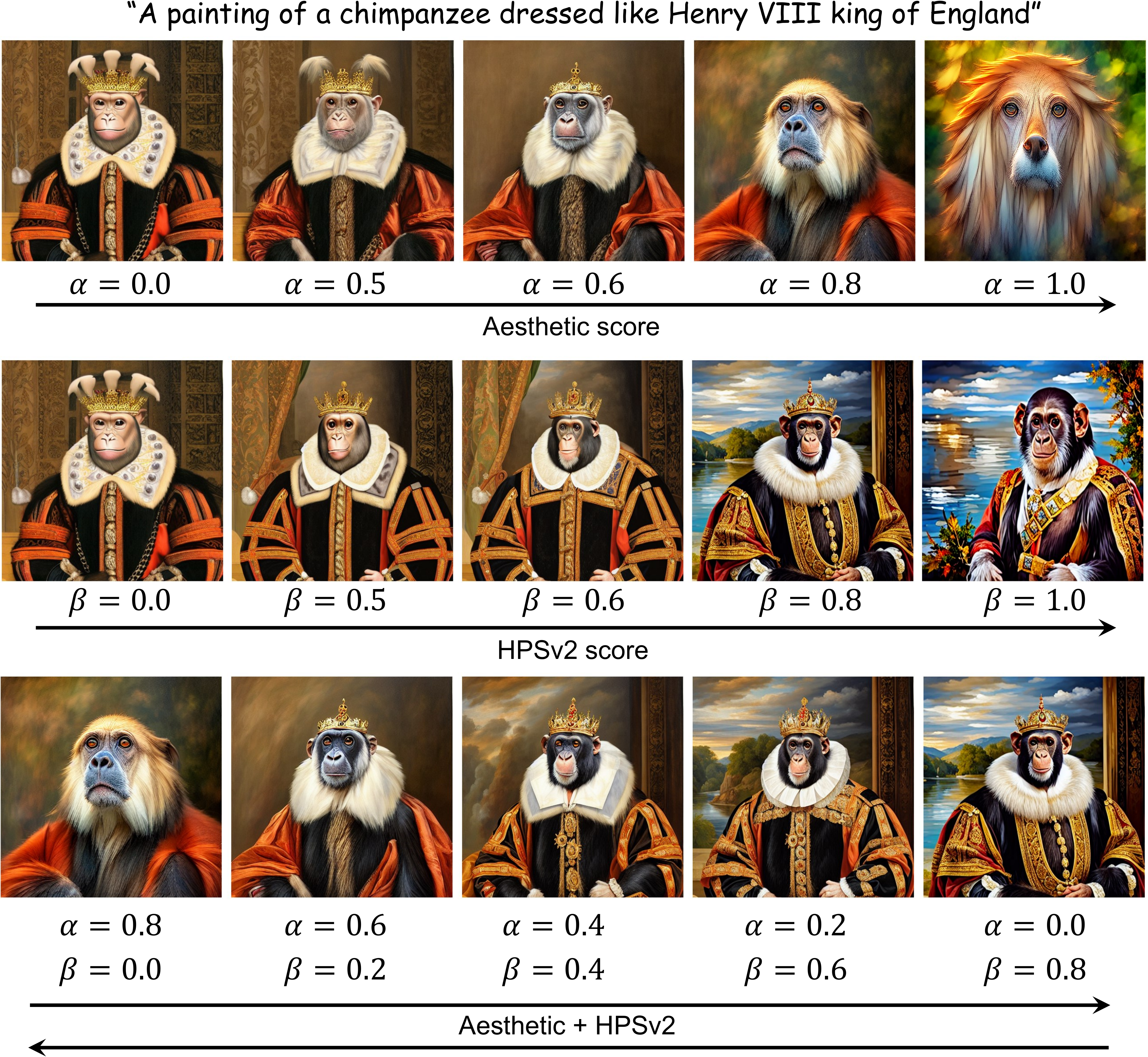}
		\caption{Generated images from SDO fine-tuned models with varying scalar multipliers applied to the LoRA weights (first and second rows). The third row demonstrates the combination of LoRA parameters independently tuned for two distinct rewards.}  \label{lora_inter}
\end{figure}

Although fine-tuning was conducted using a relatively simple set of prompts, Fig.~\ref{reward_fine} shows the results after reward fine-tuning, demonstrating that the model can generalize its style to previously unseen subjects. Moreover, by combining multiple LoRA modules, we achieve a comprehensive effect that integrates multiple independently tuned styles.

In the reward fine-tuning experiments, we also observe instances of reward hacking \cite{prabhudesai2023aligning, clark2023directly}, where over-optimization led to a loss of diversity in the generated outputs. It is particularly noticeable in tasks focused on maximizing aesthetic scores, where the model tended to produce highly specific, high-reward images at the expense of variety. However, as shown in Fig.~\ref{lora_inter}, this issue can be mitigated by controlling the amplitude of LoRA adjustments through simple interpolation ($\theta_{\text{old}} + \alpha \theta_{\text{Aesthetic}}$). Furthermore, independently fine-tuned LoRAs can be fused to achieve a more balanced effect ($\theta_{\text{old}} + \alpha \theta_{\text{Aesthetic}} + \beta \theta_{\text{HPSv2}}$), enabling more comprehensive reward alignment. These results demonstrate the effectiveness of SDO fine-tuning in generating diverse and high-quality outputs.

\subsection{Application to other diffusion frameworks}

In this section, we further explore the general applicability of our approach by testing it within different diffusion modeling frameworks. One popular framework is the Diffusion Transformer (DIT) \cite{peebles2023scalable,ma2024sit}, which replaces the U-Net architecture with a transformer backbone. This architecture is widely used for generative modeling due to its scalability. Since SDO makes no assumptions about the underlying network architecture, it is well-suited for transformer-based models. To validate this, we conduct experiments using the checkpoints provided by PixArt-$\alpha$ \cite{chen2023pixart}, which consists entirely of transformer blocks for latent diffusion.

\begin{figure}[!ht]
		\centering 
		\includegraphics[width=\columnwidth]{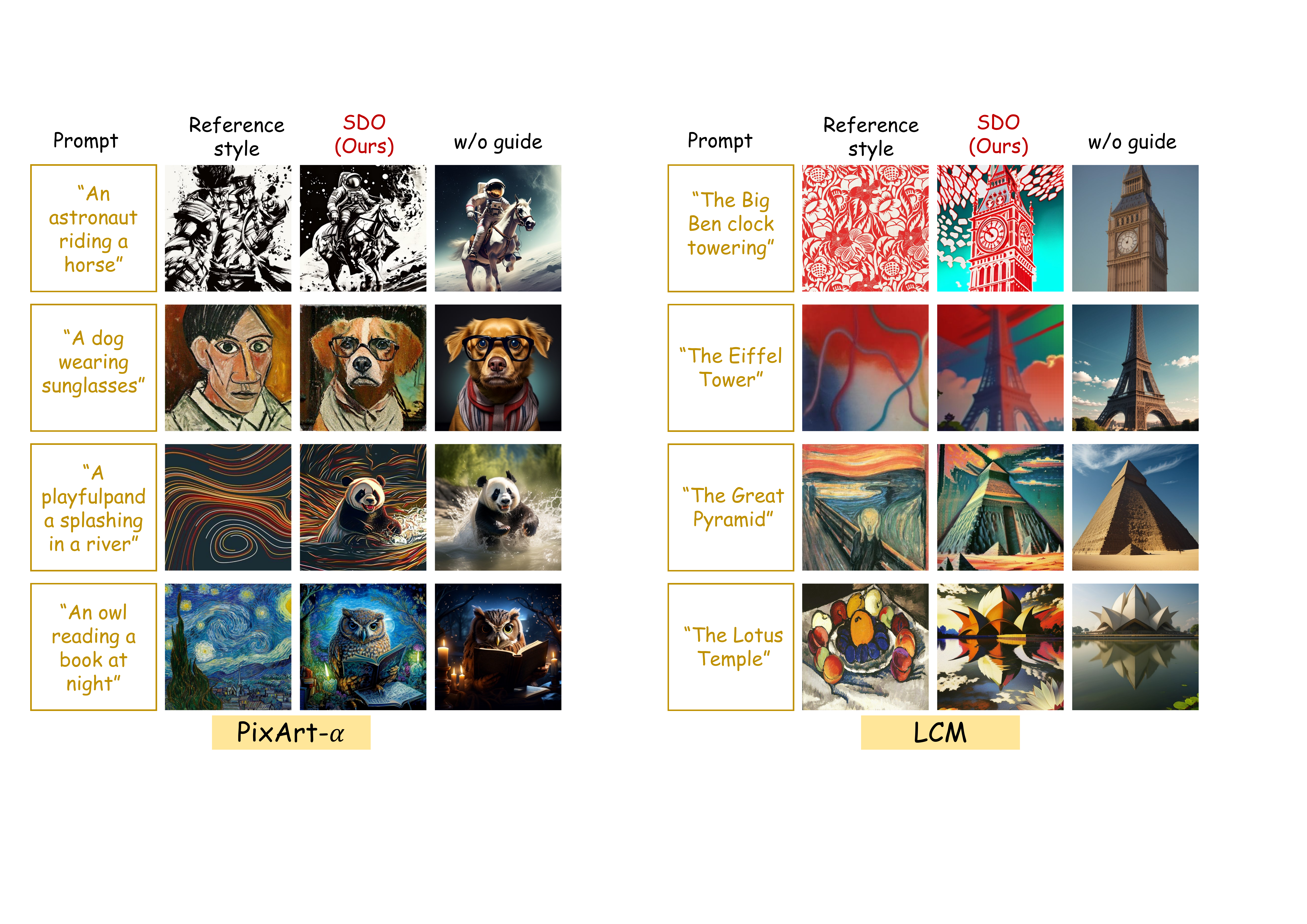}
		\caption{Style-guided generation using PixArt-$\alpha$ model \cite{chen2023pixart} with DIT backbone.}  \label{pixart}
\end{figure}

Fig.~\ref{pixart} showcases some of the results generated by the style guide. We observe that SDO can be successfully applied to advanced models like DIT, enhancing controllability and enriching downstream applications.

Additionally, we consider another important framework: the Consistency Model (CM) \cite{song2023consistency}. CM accelerates the diffusion generation process by enabling one-step generation, but this comes with a tradeoff in quality. To improve output quality, CM increases the number of sampling steps, known as multistep consistency sampling. Our experiments demonstrate that SDO can also be integrated with CM. Specifically, we keep a single-step computation graph of multistep consistency sampling, similar to that in Fig.~\ref{sample-figure}. We test this approach using Latent Consistency Models (LCMs) \cite{luo2023latent}, and the results, presented in Fig.~\ref{lcm}, further illustrate the potential of SDO in enhancing a wide variety of diffusion modeling approaches.

\begin{figure}[!ht]
		\centering 
		\includegraphics[width=\columnwidth]{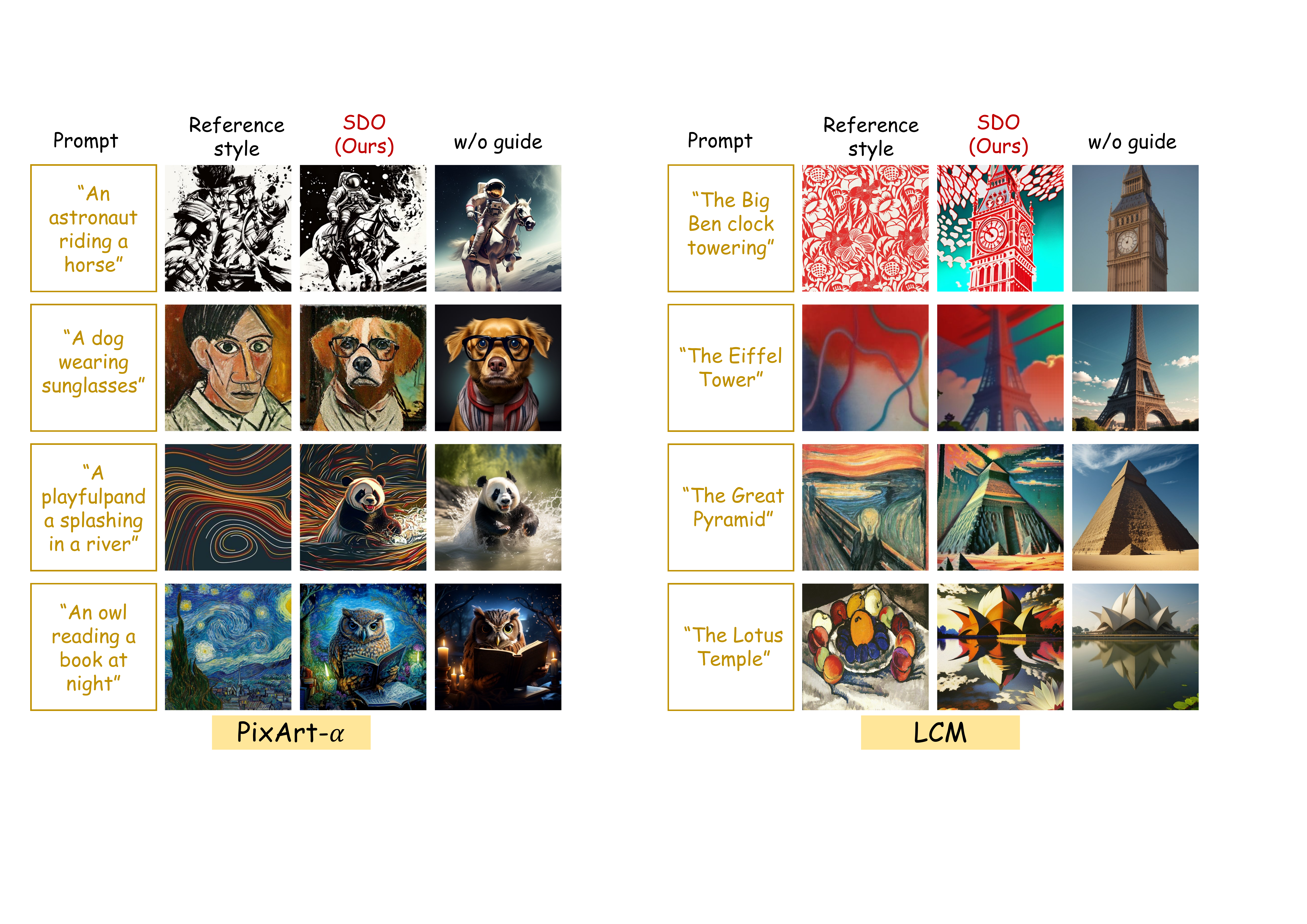}
		\caption{Style-guided generation using latent consistency model (LCM) \cite{luo2023latent}.}  \label{lcm}
\end{figure}

\subsection{Application to other diffusion solvers}

So far, we have primarily used the widely adopted DDIM solvers. However, to enhance sampling efficiency, more advanced diffusion solvers have been proposed. A notable example is the DPM-solver \cite{lu2022dpm}, which utilizes the semilinear structure of diffusion ODEs and incorporates higher-order information. We aim to verify the applicability of SDO to such solvers.

Our experiments show that SDO is indeed compatible with the DPM-solver. The only modification required is to replace the \texttt{Scheduler} in the algorithm with the \texttt{DPMSolverMultistepScheduler} from the Diffuser \cite{von-platen-etal-2022-diffusers} library (Fig.~\ref{SDO_x}), while maintaining a single step of the computational graph. As shown in Fig.~\ref{dpm}, we perform text-guided image manipulation experiments using the second-order DPM solver, and the results are comparable to those obtained with the DDIM solver.

\begin{figure}[!h]
		\centering 
		\includegraphics[width=0.9\columnwidth]{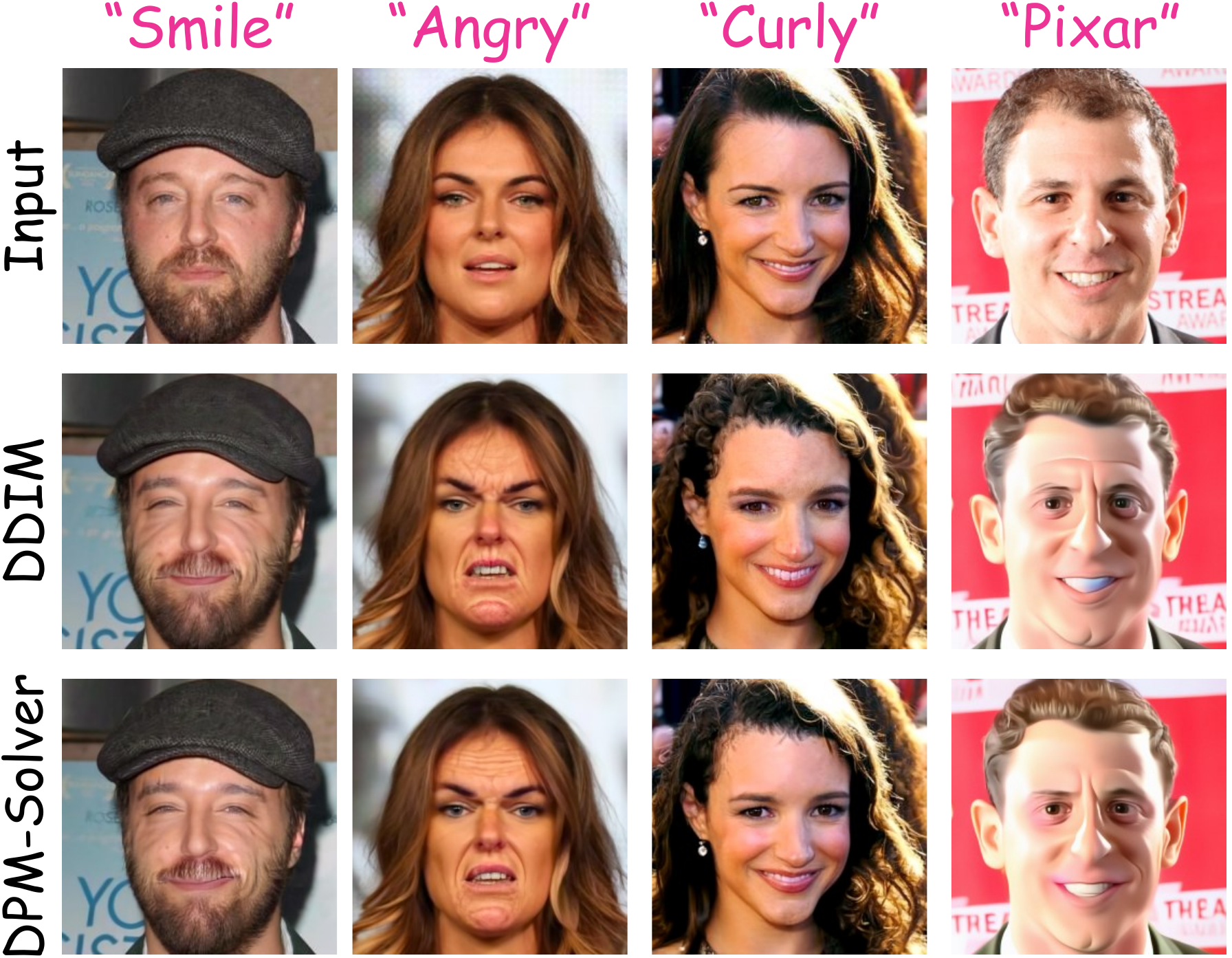}
		\caption{Visualization of text-guided face manipulation using DDIM \cite{song2020denoising} and DPM-Solver \cite{lu2022dpm}.}  \label{dpm}
\end{figure}

\section{Conclusions and limitations}\label{conc}

In this paper, we introduce Shortcut Diffusion Optimization (SDO), a stable and efficient alternative to full backpropagation for diffusion sampling. Inspired by parallel sampling via Picard iteration, SDO computes gradients at only a single step in the computation graph, significantly reducing computational overhead. This approach enables the effective optimization of all parameters along the sampling trajectory for any differentiable objective.

We validate SDO across a variety of tasks, demonstrating its versatility and computational advantages in scenarios such as controllable generation and reward-based alignment. By eliminating the need for full backpropagation through time, SDO represents a practical and scalable solution for training diffusion-based generative models.

Despite its benefits, SDO has certain limitations. Notably, it requires the downstream supervisory signal to be differentiable, which may restrict its applicability in settings involving non-differentiable objectives. Moreover, in reward optimization tasks, we observe reward hacking behaviors consistent with prior findings \cite{clark2023directly,prabhudesai2023aligning}, where the model over-optimizes for the reward signal in unintended ways. While a full investigation of this issue is beyond our current scope, it suggests a promising direction for future work. In particular, integrating regularization strategies \cite{deng2024prdp} may help address these challenges and further improve the robustness of diffusion optimization.

\bibliographystyle{IEEEtran}
\bibliography{IEEEabrv,example_paper}

\begin{thebibliography}{10}
\providecommand{\url}[1]{#1}
\csname url@samestyle\endcsname
\providecommand{\newblock}{\relax}
\providecommand{\bibinfo}[2]{#2}
\providecommand{\BIBentrySTDinterwordspacing}{\spaceskip=0pt\relax}
\providecommand{\BIBentryALTinterwordstretchfactor}{4}
\providecommand{\BIBentryALTinterwordspacing}{\spaceskip=\fontdimen2\font plus
\BIBentryALTinterwordstretchfactor\fontdimen3\font minus \fontdimen4\font\relax}
\providecommand{\BIBforeignlanguage}[2]{{%
\expandafter\ifx\csname l@#1\endcsname\relax
\typeout{** WARNING: IEEEtran.bst: No hyphenation pattern has been}%
\typeout{** loaded for the language `#1'. Using the pattern for}%
\typeout{** the default language instead.}%
\else
\language=\csname l@#1\endcsname
\fi
#2}}
\providecommand{\BIBdecl}{\relax}
\BIBdecl

\bibitem{radford2021learning}
A.~Radford, J.~W. Kim, C.~Hallacy, A.~Ramesh, G.~Goh, S.~Agarwal, G.~Sastry, A.~Askell, P.~Mishkin, J.~Clark \emph{et~al.}, ``Learning transferable visual models from natural language supervision,'' in \emph{International conference on machine learning}.\hskip 1em plus 0.5em minus 0.4em\relax PMLR, 2021, pp. 8748--8763.

\bibitem{ho2020denoising}
J.~Ho, A.~Jain, and P.~Abbeel, ``Denoising diffusion probabilistic models,'' \emph{Advances in neural information processing systems}, vol.~33, pp. 6840--6851, 2020.

\bibitem{song2020score}
Y.~Song, J.~Sohl-Dickstein, D.~P. Kingma, A.~Kumar, S.~Ermon, and B.~Poole, ``Score-based generative modeling through stochastic differential equations,'' \emph{arXiv preprint arXiv:2011.13456}, 2020.

\bibitem{rombach2022high}
R.~Rombach, A.~Blattmann, D.~Lorenz, P.~Esser, and B.~Ommer, ``High-resolution image synthesis with latent diffusion models,'' in \emph{Proceedings of the IEEE/CVF conference on computer vision and pattern recognition}, 2022, pp. 10\,684--10\,695.

\bibitem{esser2024scaling}
P.~Esser, S.~Kulal, A.~Blattmann, R.~Entezari, J.~M{\"u}ller, H.~Saini, Y.~Levi, D.~Lorenz, A.~Sauer, F.~Boesel \emph{et~al.}, ``Scaling rectified flow transformers for high-resolution image synthesis,'' in \emph{Forty-first International Conference on Machine Learning}, 2024.

\bibitem{ramesh2022hierarchical}
A.~Ramesh, P.~Dhariwal, A.~Nichol, C.~Chu, and M.~Chen, ``Hierarchical text-conditional image generation with clip latents,'' \emph{arXiv preprint arXiv:2204.06125}, vol.~1, no.~2, p.~3, 2022.

\bibitem{croitoru2023diffusion}
F.-A. Croitoru, V.~Hondru, R.~T. Ionescu, and M.~Shah, ``Diffusion models in vision: A survey,'' \emph{IEEE Transactions on Pattern Analysis and Machine Intelligence}, vol.~45, no.~9, pp. 10\,850--10\,869, 2023.

\bibitem{bie2024renaissance}
F.~Bie, Y.~Yang, Z.~Zhou, A.~Ghanem, M.~Zhang, Z.~Yao, X.~Wu, C.~Holmes, P.~Golnari, D.~A. Clifton \emph{et~al.}, ``Renaissance: A survey into ai text-to-image generation in the era of large model,'' \emph{IEEE Transactions on Pattern Analysis and Machine Intelligence}, 2024.

\bibitem{ho2022video}
J.~Ho, T.~Salimans, A.~Gritsenko, W.~Chan, M.~Norouzi, and D.~J. Fleet, ``Video diffusion models,'' \emph{Advances in Neural Information Processing Systems}, vol.~35, pp. 8633--8646, 2022.

\bibitem{liu2024sora}
Y.~Liu, K.~Zhang, Y.~Li, Z.~Yan, C.~Gao, R.~Chen, Z.~Yuan, Y.~Huang, H.~Sun, J.~Gao \emph{et~al.}, ``Sora: A review on background, technology, limitations, and opportunities of large vision models,'' \emph{arXiv preprint arXiv:2402.17177}, 2024.

\bibitem{yuan2025magictime}
S.~Yuan, J.~Huang, Y.~Shi, Y.~Xu, R.~Zhu, B.~Lin, X.~Cheng, L.~Yuan, and J.~Luo, ``Magictime: Time-lapse video generation models as metamorphic simulators,'' \emph{IEEE Transactions on Pattern Analysis and Machine Intelligence}, 2025.

\bibitem{xu2022geodiff}
M.~Xu, L.~Yu, Y.~Song, C.~Shi, S.~Ermon, and J.~Tang, ``Geodiff: A geometric diffusion model for molecular conformation generation,'' \emph{arXiv preprint arXiv:2203.02923}, 2022.

\bibitem{hoogeboom2022equivariant}
E.~Hoogeboom, V.~G. Satorras, C.~Vignac, and M.~Welling, ``Equivariant diffusion for molecule generation in 3d,'' in \emph{International conference on machine learning}.\hskip 1em plus 0.5em minus 0.4em\relax PMLR, 2022, pp. 8867--8887.

\bibitem{luo2023fast}
T.~Luo, Z.~Mo, and S.~J. Pan, ``Fast graph generation via spectral diffusion,'' \emph{IEEE Transactions on Pattern Analysis and Machine Intelligence}, vol.~46, no.~5, pp. 3496--3508, 2023.

\bibitem{janner2022planning}
M.~Janner, Y.~Du, J.~B. Tenenbaum, and S.~Levine, ``Planning with diffusion for flexible behavior synthesis,'' \emph{arXiv preprint arXiv:2205.09991}, 2022.

\bibitem{chi2023diffusion}
C.~Chi, Z.~Xu, S.~Feng, E.~Cousineau, Y.~Du, B.~Burchfiel, R.~Tedrake, and S.~Song, ``Diffusion policy: Visuomotor policy learning via action diffusion,'' \emph{The International Journal of Robotics Research}, p. 02783649241273668, 2023.

\bibitem{hansen2023idql}
P.~Hansen-Estruch, I.~Kostrikov, M.~Janner, J.~G. Kuba, and S.~Levine, ``Idql: Implicit q-learning as an actor-critic method with diffusion policies,'' \emph{arXiv preprint arXiv:2304.10573}, 2023.

\bibitem{wu2023human}
X.~Wu, Y.~Hao, K.~Sun, Y.~Chen, F.~Zhu, R.~Zhao, and H.~Li, ``Human preference score v2: A solid benchmark for evaluating human preferences of text-to-image synthesis,'' \emph{arXiv preprint arXiv:2306.09341}, 2023.

\bibitem{sohn2023styledrop}
K.~Sohn, N.~Ruiz, K.~Lee, D.~C. Chin, I.~Blok, H.~Chang, J.~Barber, L.~Jiang, G.~Entis, Y.~Li \emph{et~al.}, ``Styledrop: Text-to-image generation in any style,'' \emph{arXiv preprint arXiv:2306.00983}, 2023.

\bibitem{dorna2024tagmol}
V.~Dorna, D.~Subhalingam, K.~Kolluru, S.~Tuli, M.~Singh, S.~Singal, N.~Krishnan, and S.~Ranu, ``Tagmol: Target-aware gradient-guided molecule generation,'' \emph{arXiv preprint arXiv:2406.01650}, 2024.

\bibitem{chung2022diffusion}
H.~Chung, J.~Kim, M.~T. Mccann, M.~L. Klasky, and J.~C. Ye, ``Diffusion posterior sampling for general noisy inverse problems,'' \emph{arXiv preprint arXiv:2209.14687}, 2022.

\bibitem{yu2023freedom}
J.~Yu, Y.~Wang, C.~Zhao, B.~Ghanem, and J.~Zhang, ``Freedom: Training-free energy-guided conditional diffusion model,'' in \emph{Proceedings of the IEEE/CVF International Conference on Computer Vision}, 2023, pp. 23\,174--23\,184.

\bibitem{prabhudesai2023aligning}
M.~Prabhudesai, A.~Goyal, D.~Pathak, and K.~Fragkiadaki, ``Aligning text-to-image diffusion models with reward backpropagation,'' \emph{arXiv preprint arXiv:2310.03739}, 2023.

\bibitem{song2020denoising}
J.~Song, C.~Meng, and S.~Ermon, ``Denoising diffusion implicit models,'' \emph{arXiv preprint arXiv:2010.02502}, 2020.

\bibitem{chen2016training}
T.~Chen, B.~Xu, C.~Zhang, and C.~Guestrin, ``Training deep nets with sublinear memory cost,'' \emph{arXiv preprint arXiv:1604.06174}, 2016.

\bibitem{panadjointdpm}
J.~Pan, J.~H. Liew, V.~Tan, J.~Feng, and H.~Yan, ``Adjointdpm: Adjoint sensitivity method for gradient backpropagation of diffusion probabilistic models,'' in \emph{The Twelfth International Conference on Learning Representations}, 2024.

\bibitem{wallace2023end}
B.~Wallace, A.~Gokul, S.~Ermon, and N.~Naik, ``End-to-end diffusion latent optimization improves classifier guidance,'' in \emph{Proceedings of the IEEE/CVF International Conference on Computer Vision}, 2023, pp. 7280--7290.

\bibitem{shih2024parallel}
A.~Shih, S.~Belkhale, S.~Ermon, D.~Sadigh, and N.~Anari, ``Parallel sampling of diffusion models,'' \emph{Advances in Neural Information Processing Systems}, vol.~36, 2024.

\bibitem{luparasolver}
J.~Lu, Z.~Zhu, and J.~Hou, ``Parasolver: A hierarchical parallel integral solver for diffusion models,'' in \emph{The Thirteenth International Conference on Learning Representations}.

\bibitem{tang2024accelerating}
Z.~Tang, J.~Tang, H.~Luo, F.~Wang, and T.-H. Chang, ``Accelerating parallel sampling of diffusion models,'' in \emph{Forty-first International Conference on Machine Learning}, 2024.

\bibitem{zhang2023adding}
L.~Zhang, A.~Rao, and M.~Agrawala, ``Adding conditional control to text-to-image diffusion models,'' in \emph{Proceedings of the IEEE/CVF International Conference on Computer Vision}, 2023, pp. 3836--3847.

\bibitem{mou2024t2i}
C.~Mou, X.~Wang, L.~Xie, Y.~Wu, J.~Zhang, Z.~Qi, and Y.~Shan, ``T2i-adapter: Learning adapters to dig out more controllable ability for text-to-image diffusion models,'' in \emph{Proceedings of the AAAI Conference on Artificial Intelligence}, vol.~38, no.~5, 2024, pp. 4296--4304.

\bibitem{xia2024diffusion}
M.~Xia, Y.~Zhou, R.~Yi, Y.-J. Liu, and W.~Wang, ``A diffusion model translator for efficient image-to-image translation,'' \emph{IEEE Transactions on Pattern Analysis and Machine Intelligence}, 2024.

\bibitem{he2025diffusion}
C.~He, Y.~Shen, C.~Fang, F.~Xiao, L.~Tang, Y.~Zhang, W.~Zuo, Z.~Guo, and X.~Li, ``Diffusion models in low-level vision: A survey,'' \emph{IEEE Transactions on Pattern Analysis and Machine Intelligence}, 2025.

\bibitem{dhariwal2021diffusion}
P.~Dhariwal and A.~Nichol, ``Diffusion models beat gans on image synthesis,'' \emph{Advances in neural information processing systems}, vol.~34, pp. 8780--8794, 2021.

\bibitem{nichol2021glide}
A.~Nichol, P.~Dhariwal, A.~Ramesh, P.~Shyam, P.~Mishkin, B.~McGrew, I.~Sutskever, and M.~Chen, ``Glide: Towards photorealistic image generation and editing with text-guided diffusion models,'' \emph{arXiv preprint arXiv:2112.10741}, 2021.

\bibitem{weiss2023guided}
T.~Weiss, E.~Mayo~Yanes, S.~Chakraborty, L.~Cosmo, A.~M. Bronstein, and R.~Gershoni-Poranne, ``Guided diffusion for inverse molecular design,'' \emph{Nature Computational Science}, vol.~3, no.~10, pp. 873--882, 2023.

\bibitem{liu2023flowgrad}
X.~Liu, L.~Wu, S.~Zhang, C.~Gong, W.~Ping, and Q.~Liu, ``Flowgrad: Controlling the output of generative odes with gradients,'' in \emph{Proceedings of the IEEE/CVF Conference on Computer Vision and Pattern Recognition}, 2023, pp. 24\,335--24\,344.

\bibitem{novack2024ditto}
Z.~Novack, J.~McAuley, T.~Berg-Kirkpatrick, and N.~J. Bryan, ``Ditto: Diffusion inference-time t-optimization for music generation,'' \emph{arXiv preprint arXiv:2401.12179}, 2024.

\bibitem{bansal2023universal}
A.~Bansal, H.-M. Chu, A.~Schwarzschild, S.~Sengupta, M.~Goldblum, J.~Geiping, and T.~Goldstein, ``Universal guidance for diffusion models,'' in \emph{Proceedings of the IEEE/CVF Conference on Computer Vision and Pattern Recognition}, 2023, pp. 843--852.

\bibitem{dinh2014nice}
L.~Dinh, D.~Krueger, and Y.~Bengio, ``Nice: Non-linear independent components estimation,'' \emph{arXiv preprint arXiv:1410.8516}, 2014.

\bibitem{dinh2016density}
L.~Dinh, J.~Sohl-Dickstein, and S.~Bengio, ``Density estimation using real nvp,'' \emph{arXiv preprint arXiv:1605.08803}, 2016.

\bibitem{pontryagin2018mathematical}
L.~S. Pontryagin, \emph{Mathematical theory of optimal processes}.\hskip 1em plus 0.5em minus 0.4em\relax Routledge, 2018.

\bibitem{chen2018neural}
R.~T. Chen, Y.~Rubanova, J.~Bettencourt, and D.~K. Duvenaud, ``Neural ordinary differential equations,'' \emph{Advances in neural information processing systems}, vol.~31, 2018.

\bibitem{blasingame2024adjointdeis}
Z.~W. Blasingame and C.~Liu, ``Adjointdeis: Efficient gradients for diffusion models,'' \emph{arXiv preprint arXiv:2405.15020}, 2024.

\bibitem{stiennon2020learning}
N.~Stiennon, L.~Ouyang, J.~Wu, D.~Ziegler, R.~Lowe, C.~Voss, A.~Radford, D.~Amodei, and P.~F. Christiano, ``Learning to summarize with human feedback,'' \emph{Advances in Neural Information Processing Systems}, vol.~33, pp. 3008--3021, 2020.

\bibitem{ziegler2019fine}
D.~M. Ziegler, N.~Stiennon, J.~Wu, T.~B. Brown, A.~Radford, D.~Amodei, P.~Christiano, and G.~Irving, ``Fine-tuning language models from human preferences,'' \emph{arXiv preprint arXiv:1909.08593}, 2019.

\bibitem{ouyang2022training}
L.~Ouyang, J.~Wu, X.~Jiang, D.~Almeida, C.~Wainwright, P.~Mishkin, C.~Zhang, S.~Agarwal, K.~Slama, A.~Ray \emph{et~al.}, ``Training language models to follow instructions with human feedback,'' \emph{Advances in neural information processing systems}, vol.~35, pp. 27\,730--27\,744, 2022.

\bibitem{lee2023aligning}
K.~Lee, H.~Liu, M.~Ryu, O.~Watkins, Y.~Du, C.~Boutilier, P.~Abbeel, M.~Ghavamzadeh, and S.~S. Gu, ``Aligning text-to-image models using human feedback,'' \emph{arXiv preprint arXiv:2302.12192}, 2023.

\bibitem{wu2023humanpreferencescorebetter}
\BIBentryALTinterwordspacing
X.~Wu, K.~Sun, F.~Zhu, R.~Zhao, and H.~Li, ``Human preference score: Better aligning text-to-image models with human preference,'' 2023. [Online]. Available: \url{https://arxiv.org/abs/2303.14420}
\BIBentrySTDinterwordspacing

\bibitem{fan2024reinforcement}
Y.~Fan, O.~Watkins, Y.~Du, H.~Liu, M.~Ryu, C.~Boutilier, P.~Abbeel, M.~Ghavamzadeh, K.~Lee, and K.~Lee, ``Reinforcement learning for fine-tuning text-to-image diffusion models,'' \emph{Advances in Neural Information Processing Systems}, vol.~36, 2024.

\bibitem{black2023training}
K.~Black, M.~Janner, Y.~Du, I.~Kostrikov, and S.~Levine, ``Training diffusion models with reinforcement learning,'' \emph{arXiv preprint arXiv:2305.13301}, 2023.

\bibitem{clark2023directly}
K.~Clark, P.~Vicol, K.~Swersky, and D.~J. Fleet, ``Directly fine-tuning diffusion models on differentiable rewards,'' \emph{arXiv preprint arXiv:2309.17400}, 2023.

\bibitem{wu2024deep}
X.~Wu, Y.~Hao, M.~Zhang, K.~Sun, Z.~Huang, G.~Song, Y.~Liu, and H.~Li, ``Deep reward supervisions for tuning text-to-image diffusion models,'' in \emph{European Conference on Computer Vision}.\hskip 1em plus 0.5em minus 0.4em\relax Springer, 2024, pp. 108--124.

\bibitem{song2019generative}
Y.~Song and S.~Ermon, ``Generative modeling by estimating gradients of the data distribution,'' \emph{Advances in neural information processing systems}, vol.~32, 2019.

\bibitem{lipman2022flow}
Y.~Lipman, R.~T. Chen, H.~Ben-Hamu, M.~Nickel, and M.~Le, ``Flow matching for generative modeling,'' \emph{arXiv preprint arXiv:2210.02747}, 2022.

\bibitem{liu2022flow}
X.~Liu, C.~Gong, and Q.~Liu, ``Flow straight and fast: Learning to generate and transfer data with rectified flow,'' \emph{arXiv preprint arXiv:2209.03003}, 2022.

\bibitem{medsker2001recurrent}
L.~R. Medsker, L.~Jain \emph{et~al.}, ``Recurrent neural networks,'' \emph{Design and Applications}, vol.~5, no. 64-67, p.~2, 2001.

\bibitem{hochreiter1997long}
S.~Hochreiter, ``Long short-term memory,'' \emph{Neural Computation MIT-Press}, 1997.

\bibitem{rumelhart1986learning}
D.~E. Rumelhart, G.~E. Hinton, and R.~J. Williams, ``Learning internal representations by error propagation, parallel distributed processing, explorations in the microstructure of cognition, ed. de rumelhart and j. mcclelland. vol. 1. 1986,'' \emph{Biometrika}, vol.~71, no. 599-607, p.~6, 1986.

\bibitem{werbos1990backpropagation}
P.~J. Werbos, ``Backpropagation through time: what it does and how to do it,'' \emph{Proceedings of the IEEE}, vol.~78, no.~10, pp. 1550--1560, 1990.

\bibitem{lorraine2020optimizing}
J.~Lorraine, P.~Vicol, and D.~Duvenaud, ``Optimizing millions of hyperparameters by implicit differentiation,'' in \emph{International conference on artificial intelligence and statistics}.\hskip 1em plus 0.5em minus 0.4em\relax PMLR, 2020, pp. 1540--1552.

\bibitem{blondel2022efficient}
M.~Blondel, Q.~Berthet, M.~Cuturi, R.~Frostig, S.~Hoyer, F.~Llinares-L{\'o}pez, F.~Pedregosa, and J.-P. Vert, ``Efficient and modular implicit differentiation,'' \emph{Advances in neural information processing systems}, vol.~35, pp. 5230--5242, 2022.

\bibitem{griewank2008evaluating}
A.~Griewank and A.~Walther, \emph{Evaluating derivatives: principles and techniques of algorithmic differentiation}.\hskip 1em plus 0.5em minus 0.4em\relax SIAM, 2008.

\bibitem{krantz2002implicit}
S.~G. Krantz and H.~R. Parks, \emph{The implicit function theorem: history, theory, and applications}.\hskip 1em plus 0.5em minus 0.4em\relax Springer Science \& Business Media, 2002.

\bibitem{geng2021attentionbettermatrixdecomposition}
\BIBentryALTinterwordspacing
Z.~Geng, M.-H. Guo, H.~Chen, X.~Li, K.~Wei, and Z.~Lin, ``Is attention better than matrix decomposition?'' 2021. [Online]. Available: \url{https://arxiv.org/abs/2109.04553}
\BIBentrySTDinterwordspacing

\bibitem{fung2022jfb}
S.~W. Fung, H.~Heaton, Q.~Li, D.~McKenzie, S.~Osher, and W.~Yin, ``Jfb: Jacobian-free backpropagation for implicit networks,'' in \emph{Proceedings of the AAAI Conference on Artificial Intelligence}, vol.~36, no.~6, 2022, pp. 6648--6656.

\bibitem{bolte2024one}
J.~Bolte, E.~Pauwels, and S.~Vaiter, ``One-step differentiation of iterative algorithms,'' \emph{Advances in Neural Information Processing Systems}, vol.~36, 2024.

\bibitem{du2022learning}
Y.~Du, S.~Li, J.~Tenenbaum, and I.~Mordatch, ``Learning iterative reasoning through energy minimization,'' in \emph{International Conference on Machine Learning}.\hskip 1em plus 0.5em minus 0.4em\relax PMLR, 2022, pp. 5570--5582.

\bibitem{paszke2019pytorch}
A.~Paszke, S.~Gross, F.~Massa, A.~Lerer, J.~Bradbury, G.~Chanan, T.~Killeen, Z.~Lin, N.~Gimelshein, L.~Antiga \emph{et~al.}, ``Pytorch: An imperative style, high-performance deep learning library,'' \emph{Advances in neural information processing systems}, vol.~32, 2019.

\bibitem{xu2024consistency}
T.~Xu, Z.~Zhu, J.~Li, D.~He, Y.~Wang, M.~Sun, L.~Li, H.~Qin, Y.~Wang, J.~Liu \emph{et~al.}, ``Consistency model is an effective posterior sample approximation for diffusion inverse solvers,'' \emph{arXiv preprint arXiv:2403.12063}, 2024.

\bibitem{li2024physics}
Z.~Li, H.~Dou, S.~Fang, W.~Han, Y.~Deng, and L.~Yang, ``Physics-aligned schr$\backslash$"$\{$o$\}$ dinger bridge,'' \emph{arXiv preprint arXiv:2409.17825}, 2024.

\bibitem{liu2021fusedream}
X.~Liu, C.~Gong, L.~Wu, S.~Zhang, H.~Su, and Q.~Liu, ``Fusedream: Training-free text-to-image generation with improved clip+ gan space optimization,'' \emph{arXiv preprint arXiv:2112.01573}, 2021.

\bibitem{karras2017progressive}
T.~Karras, ``Progressive growing of gans for improved quality, stability, and variation,'' \emph{arXiv preprint arXiv:1710.10196}, 2017.

\bibitem{karras2019style}
T.~Karras, S.~Laine, and T.~Aila, ``A style-based generator architecture for generative adversarial networks,'' in \emph{Proceedings of the IEEE/CVF conference on computer vision and pattern recognition}, 2019, pp. 4401--4410.

\bibitem{kingma2014adam}
D.~P. Kingma, ``Adam: A method for stochastic optimization,'' \emph{arXiv preprint arXiv:1412.6980}, 2014.

\bibitem{Kim_2022_CVPR}
G.~Kim, T.~Kwon, and J.~C. Ye, ``Diffusionclip: Text-guided diffusion models for robust image manipulation,'' in \emph{Proceedings of the IEEE/CVF Conference on Computer Vision and Pattern Recognition (CVPR)}, June 2022, pp. 2426--2435.

\bibitem{zhang2018unreasonable}
R.~Zhang, P.~Isola, A.~A. Efros, E.~Shechtman, and O.~Wang, ``The unreasonable effectiveness of deep features as a perceptual metric,'' in \emph{Proceedings of the IEEE conference on computer vision and pattern recognition}, 2018, pp. 586--595.

\bibitem{deng2019arcface}
J.~Deng, J.~Guo, N.~Xue, and S.~Zafeiriou, ``Arcface: Additive angular margin loss for deep face recognition,'' in \emph{Proceedings of the IEEE/CVF conference on computer vision and pattern recognition}, 2019, pp. 4690--4699.

\bibitem{johnson2016perceptual}
J.~Johnson, A.~Alahi, and L.~Fei-Fei, ``Perceptual losses for real-time style transfer and super-resolution,'' in \emph{Computer Vision--ECCV 2016: 14th European Conference, Amsterdam, The Netherlands, October 11-14, 2016, Proceedings, Part II 14}.\hskip 1em plus 0.5em minus 0.4em\relax Springer, 2016, pp. 694--711.

\bibitem{saleh2015large}
B.~Saleh and A.~Elgammal, ``Large-scale classification of fine-art paintings: Learning the right metric on the right feature,'' \emph{arXiv preprint arXiv:1505.00855}, 2015.

\bibitem{pressman2022simulacra}
\BIBentryALTinterwordspacing
J.~D. Pressman, K.~Crowson, and S.~C. Contributors, ``Simulacra aesthetic captions,'' Stability AI, Technical Report, 2022. [Online]. Available: \url{https://github.com/JD-P/simulacra-aesthetic-captions}
\BIBentrySTDinterwordspacing

\bibitem{he2016deep}
K.~He, X.~Zhang, S.~Ren, and J.~Sun, ``Deep residual learning for image recognition,'' in \emph{Proceedings of the IEEE conference on computer vision and pattern recognition}, 2016, pp. 770--778.

\bibitem{deng2009imagenet}
J.~Deng, W.~Dong, R.~Socher, L.-J. Li, K.~Li, and L.~Fei-Fei, ``Imagenet: A large-scale hierarchical image database,'' in \emph{2009 IEEE conference on computer vision and pattern recognition}.\hskip 1em plus 0.5em minus 0.4em\relax Ieee, 2009, pp. 248--255.

\bibitem{heusel2017gans}
M.~Heusel, H.~Ramsauer, T.~Unterthiner, B.~Nessler, and S.~Hochreiter, ``Gans trained by a two time-scale update rule converge to a local nash equilibrium,'' \emph{Advances in neural information processing systems}, vol.~30, 2017.

\bibitem{qu2023unsafe}
Y.~Qu, X.~Shen, X.~He, M.~Backes, S.~Zannettou, and Y.~Zhang, ``Unsafe diffusion: On the generation of unsafe images and hateful memes from text-to-image models,'' in \emph{Proceedings of the 2023 ACM SIGSAC conference on computer and communications security}, 2023, pp. 3403--3417.

\bibitem{hu2021lora}
E.~J. Hu, Y.~Shen, P.~Wallis, Z.~Allen-Zhu, Y.~Li, S.~Wang, L.~Wang, and W.~Chen, ``Lora: Low-rank adaptation of large language models,'' \emph{arXiv preprint arXiv:2106.09685}, 2021.

\bibitem{loshchilov2017decoupled}
I.~Loshchilov, ``Decoupled weight decay regularization,'' \emph{arXiv preprint arXiv:1711.05101}, 2017.

\bibitem{peebles2023scalable}
W.~Peebles and S.~Xie, ``Scalable diffusion models with transformers,'' in \emph{Proceedings of the IEEE/CVF international conference on computer vision}, 2023, pp. 4195--4205.

\bibitem{ma2024sit}
N.~Ma, M.~Goldstein, M.~S. Albergo, N.~M. Boffi, E.~Vanden-Eijnden, and S.~Xie, ``Sit: Exploring flow and diffusion-based generative models with scalable interpolant transformers,'' in \emph{European Conference on Computer Vision}.\hskip 1em plus 0.5em minus 0.4em\relax Springer, 2024, pp. 23--40.

\bibitem{chen2023pixart}
J.~Chen, J.~Yu, C.~Ge, L.~Yao, E.~Xie, Y.~Wu, Z.~Wang, J.~Kwok, P.~Luo, H.~Lu \emph{et~al.}, ``Pixart-$alpha$ : Fast training of diffusion transformer for photorealistic text-to-image synthesis,'' \emph{arXiv preprint arXiv:2310.00426}, 2023.

\bibitem{song2023consistency}
Y.~Song, P.~Dhariwal, M.~Chen, and I.~Sutskever, ``Consistency models,'' in \emph{International Conference on Machine Learning}.\hskip 1em plus 0.5em minus 0.4em\relax PMLR, 2023, pp. 32\,211--32\,252.

\bibitem{luo2023latent}
S.~Luo, Y.~Tan, L.~Huang, J.~Li, and H.~Zhao, ``Latent consistency models: Synthesizing high-resolution images with few-step inference,'' \emph{arXiv preprint arXiv:2310.04378}, 2023.

\bibitem{lu2022dpm}
C.~Lu, Y.~Zhou, F.~Bao, J.~Chen, C.~Li, and J.~Zhu, ``Dpm-solver: A fast ode solver for diffusion probabilistic model sampling in around 10 steps,'' \emph{Advances in Neural Information Processing Systems}, vol.~35, pp. 5775--5787, 2022.

\bibitem{von-platen-etal-2022-diffusers}
P.~von Platen, S.~Patil, A.~Lozhkov, P.~Cuenca, N.~Lambert, K.~Rasul, M.~Davaadorj, D.~Nair, S.~Paul, W.~Berman, Y.~Xu, S.~Liu, and T.~Wolf, ``Diffusers: State-of-the-art diffusion models,'' \url{https://github.com/huggingface/diffusers}, 2022.

\bibitem{deng2024prdp}
F.~Deng, Q.~Wang, W.~Wei, T.~Hou, and M.~Grundmann, ``Prdp: Proximal reward difference prediction for large-scale reward finetuning of diffusion models,'' in \emph{Proceedings of the IEEE/CVF Conference on Computer Vision and Pattern Recognition}, 2024, pp. 7423--7433.

\end{thebibliography}

\end{document}